\newcommand{\lratio}[1]{\setlength{\hsize}{#1\hsize}} 
\newcolumntype{Y}{>{\lratio{0.65}}>{\centering\arraybackslash}X}
\newenvironment{sproof}{\proof}{\endproof}
\DeclareMathOperator*{\argmax}{\arg\!\max}
\newcommand{\indep}{\perp \!\!\! \perp}
\begin{document}
\title{Deep Anti-Regularized Ensembles provide reliable out-of-distribution uncertainty quantification}
\titlerunning{Deep Anti-regularized Ensembles}



\institute{Manufacture Française des Pneumatiques Michelin, Clermont-Ferrand, France \\ \email{\{antoine.de-mathelin-de-papigny, francois.deheeger\}@michelin.com} \and 
Université Paris-Saclay, CNRS, ENS Paris-Saclay, Centre Borelli, Gif-sur-Yvette, France \email{\{mathilde.mougeot, nicolas.vayatis\}@ens-paris-saclay.fr}}


\author{Antoine de Mathelin\inst{1, 2} \and
Francois Deheeger\inst{1} \and
Mathilde Mougeot\inst{2} \and
Nicolas Vayatis\inst{2}
}

\authorrunning{A. de Mathelin et al.}

%
%
%
\maketitle              
\begin{abstract}
We consider the problem of uncertainty quantification in high dimensional regression and classification for which deep ensemble have proven to be promising methods. Recent observations have shown that deep ensemble often return overconfident estimates outside the training domain, which is a major limitation because shifted distributions are often encountered in real-life scenarios. The principal challenge for this problem is to solve the trade-off between increasing the diversity of the ensemble outputs and making accurate in-distribution predictions. In this work, we show that an ensemble of networks with large weights fitting the training data are likely to meet these two objectives. We derive a simple and practical approach to produce such ensembles, based on an original anti-regularization term penalizing small weights and a control process of the weight increase which maintains the in-distribution loss under an acceptable threshold. The developed approach does not require any out-of-distribution training data neither any trade-off hyper-parameter calibration. We derive a theoretical framework for this approach and show that the proposed optimization can be seen as a "water-filling" problem. Several experiments in both regression and classification settings highlight that Deep Anti-Regularized Ensembles (DARE) significantly improve uncertainty quantification outside the training domain in comparison to recent deep ensembles and out-of-distribution detection methods.  All the conducted experiments are reproducible and the source code is available at \url{https://github.com/antoinedemathelin/DARE}.

\keywords{Deep Ensemble  \and Uncertainty \and Out-of-distribution \and Anti-regularization}
\end{abstract}

\section{Introduction}
\label{intro}

With the adoption of deep learning models in a 
variety of real-life applications such as autonomous vehicles \cite{choi2019gaussianAutonomousDrivingCars,feng2018AutonimousDrivingCars}, or industrial product certification \cite{Mamalet2021whiteParerCertifyML}, providing uncertainty quantification for their predictions becomes critical. Indeed, various adaptations of classical uncertainty quantification methods to deep learning predictions have been recently introduced as Bayesian neural networks \cite{mackay1992bayesianNetwork,neal2012bayesianNetwork}, MC-dropout \cite{gal2016MCdropout}, quantile regression \cite{romano2019conformalizedQuantileRegression} and deep ensembles \cite{Lakshminarayanan2017DeepEnsemble,Wen2020batchensemble,Wenzel2020hyperDeepEnsemble}.
These methods appear to be quite efficient in predicting the uncertainty in the training domain (the domain defined by the training set), called in-distribution or aleatoric uncertainty \cite{abdar2021UncertaintyQuantificationSurvey}. However, when dealing with data outside the training distribution, i.e. out-of-distribution data (OOD), the uncertainty estimation often appears to be overconfident \cite{Angelo2021BayesianNotSuited4OOD,Liu2021PerilDeepOOD,ovadia2019CanYouTrustYourModel}. This is a critical issue, because deep models are often deployed on shifted distributions \cite{deMathelin2021HandlingTireDesign,Saenko2010Office,xu2019DomainShiftSegmentation}; overconfidence on an uncontrolled domain can lead to dramatic consequences in autonomous cars or to poor industrial choices in product design.

One problem to be solved is to increase the output diversity of the ensemble in regions where no data are available during training. This is a very challenging task as neural network outputs are difficult to control outside of the training data regions. In this perspective, contrastive methods make use of real \cite{pagliardini2022DBAT,Tifrea2022semiSupervisedOOD} or synthetic \cite{Jain2020MOD,Mehrtens2022MODplus,Segonne2022OODpseudoInputs} auxiliary OOD data to constrain the network output out-of-distribution. However, these approaches can not guarantee  prediction diversity for unseen OOD data as the auxiliary sample may not be representative of real OODs encountered by the deployed ensemble. Another set of methods assumes that the diversity of the ensemble outputs is linked to the diversity of the networks' architectures \cite{Zaidi2021NIPSNES}, hyper-parameters \cite{Wenzel2020hyperDeepEnsemble}, internal representations \cite{rame2021diceUncertainty,Sinha2021DIBS} or weights \cite{Angelo2021RepulsiveDeepEnsemble,pearce2018AnchorNetwork}. The main difficulty encountered when using these approaches is to solve the trade-off between increasing the ensemble diversity and providing accurate prediction in-distribution. The current approach to deal with this issue consists in setting a trade-off parameter with hold-out validation \cite{Jain2016ALDiversity,liu1999negativecorrelation,pearce2018AnchorNetwork} which is time consuming and often penalizes the diversity.

Considering these difficulties, a question that naturally arises is how to ensure important output diversity for any unknown data region while maintaining accurate in-distribution predictions? In this work, we tackle this question with the following reasoning: an ensemble of large weights networks essentially produces outputs with large variance for all data points. Furthermore, to make accurate prediction on the training distribution, the output variance for training data needs to be reduced, which requires that some of the network's weights are also reduced. However, to prevent the output variance from being reduced anywhere other than the training domain, the network weights should then be kept as large as possible. Following this reasoning, we seek an ensemble providing accurate prediction in-distribution and keeping the weights as large as possible.

To meet these two objectives, deviating from traditional recommendations for deep learning training, we propose an anti-regularization process that consists in penalizing small weights during training optimization. To find the right trade-off between increasing the weights and providing accurate prediction in-distribution, we introduce a control process that activates or deactivates the weight increase after each batch update if the training loss is respectively under or above a threshold. Thus, an increase of the weights induces an increase of the prediction variance while the control on the loss enforces accurate in-distribution predictions. Synthetic experiments on toy datasets confirm the efficiency of our proposed approach (cf Figure \ref{toy}). We observe that the uncertainty estimates of our Deep Anti-Regularized Ensembles (DARE) increase for any data point deviating from the training domain, whereas, for the vanilla deep ensemble, the uncertainty estimates remain low for some OOD regions.

\begin{figure}
    \centering
    \begin{minipage}{0.495\linewidth}
    \begin{minipage}{0.5\linewidth}
        \centering
        \includegraphics[width=\linewidth]{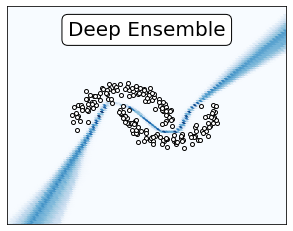} 
    \end{minipage}\hfill
    \begin{minipage}{0.5\linewidth}
        \centering
        \includegraphics[width=\linewidth]{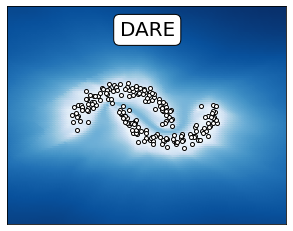} 
    \end{minipage}
    \caption*{(a) "Two-moons" classification}
    \end{minipage}
    \begin{minipage}{0.495\linewidth}
    \begin{minipage}{0.5\linewidth}
        \centering
        \includegraphics[width=\linewidth]{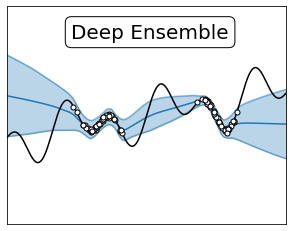} 
    \end{minipage}\hfill
    \begin{minipage}{0.5\linewidth}
        \centering
        \includegraphics[width=\linewidth]{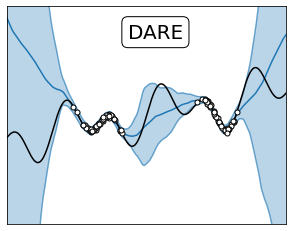} 
    \end{minipage}
    \caption*{(b) 1D-regression \cite{Jain2016ALDiversity}}
    \end{minipage}
    \caption{\textbf{Synthetic datasets uncertainty estimation}. White points represent the training data. For each experiment, the ensemble are composed of 20 neural networks. For classification, darker areas correspond to higher predicted uncertainty. For regression, the confidence intervals for $\pm 2 \sigma$ are represented in light blue. The full experiment description is reported in Appendix.}
    \label{toy}
\end{figure}

\noindent The contributions of the present work are the following:

\begin{itemize}
    \item A novel and simple anti-regularization strategy is proposed to increase deep ensemble diversity.
    \item An original control process addresses the trade-off issue between in-distribution accuracy and reliable OOD uncertainty estimates.
    \item We provide theoretical arguments to understand DARE as a "water-filling" optimization problem where a bounded global amount of variance is dispatched among the network weights.
    \item A new experimental setup for uncertainty quantification with shifted distribution is developed for regression. Experiments are also conducted for out-of-distribution detection following the setup of \cite{Angelo2021RepulsiveDeepEnsemble}.
\end{itemize}

\section{Deep Anti-Regularized Ensemble}

\subsection{Notations}

We consider the supervised learning scenario where $\mathcal{X} \subset \mathbb{R}^p$ and $\mathcal{Y} \subset \mathbb{R}^q$ are respectively the input and output space, with $p, q \in \mathbb{N}$. The learner has access to a training sample, $\mathcal{S} = \{(x_1, y_1), ..., (x_n, y_n) \} \in \mathcal{X} \times \mathcal{Y} $ of size $n \in \mathbb{N}$. We consider a set $\mathcal{H}$ of neural networks $h_{\theta} \in \mathcal{H}$ where $\theta \in \mathbb{R}^d$ refers to the network weights. We consider a loss function $l: \mathcal{Y} \times \mathcal{Y} \to \mathbb{R}_+$ and define the average error of any $h_{\theta} \in \mathcal{H}$ on $\mathcal{S}$, $\mathcal{L}_{\mathcal{S}}(h_{\theta}) = \frac{1}{n} \sum_{(x_i, y_i) \in \mathcal{S}} l(h_{\theta}(x_i), y_i)$.

\subsection{Optimization formulation}
\label{optim-form}

The main assumption of this present work is that accurate in-distribution prediction and output diversity for OOD can be obtained by increasing the networks' weights as much as possible while maintaining the training loss under a threshold. We further assume that the variance is increased when the weights of each network is increased. Based on these assumptions, we propose the simple following optimization for every member $h_{\theta} \in \mathcal{H}$ of the ensemble:
\begin{equation}
\label{optim}
    \min_{\theta} \; \mathcal{L}_{\mathcal{S}}(h_{\theta}) - \lambda \, \mathcal{R}(\theta)
\end{equation}
with $\mathcal{R}: \mathbb{R}^d \to \mathbb{R}_+$ a monotone function growing with $||\theta||_2$. The parameter $\lambda \in \{0, 1\}$ is a binary variable which controls the trade-off between the in-distribution loss and the regularization. At each batch computation, $\lambda$ is updated as follows:
\begin{equation}
    \lambda =
    \begin{cases}
      1 & \text{if $\mathcal{L}_{\mathcal{S}}(h_{\theta}) \leq \tau$}\\
      0 & \text{if $\mathcal{L}_{\mathcal{S}}(h_{\theta}) > \tau$}
    \end{cases}
\end{equation}
with $\tau \in \mathbb{R}$ a predefined threshold which defines the acceptable level of performance targeted by the learner.
\begin{itemize}
    \item The first term of the optimization objective in Eq. (\ref{optim}): $\mathcal{L}_{\mathcal{S}}(h_{\theta})$ is the loss in-distribution. This term induces the network to fit the training data which implies smaller in-distribution prediction variances.
    \item The second term $-\lambda \mathcal{R}(\theta)$ acts as an "anti-regularization" term which induces an increase of the network weights. This implies a larger variance of the ensemble weights, and therefore a larger prediction variance especially for data "far" from the training distribution on which the network's predictions are not conditioned. 
\end{itemize}
The underlying idea of the proposed optimization is that, to fulfill both objectives: reducing the loss in-distribution and increasing the weights,  large weights will appear more likely in front of neurons which are never or weakly activated by the training data. Therefore, if an out-of-distribution data point activates one of these neurons, large values are propagated through the networks which induces larger prediction variances. We show in Section \ref{analysis}, that this claim is supported by theoretical analysis and empirical observations.

The control process is necessary to temper the weight increase, because a large increase of the weights induces an unstable network with reduced accuracy on training data. To be sure to fulfill a performance threshold $\tau$, the weight increase is stopped ($\lambda = 0$) until the loss in-distribution comes back under the threshold. Therefore, the resulting ensemble is composed of networks fitting the training data with weights as large as possible.


\section{Implementation}
\label{impl}

\textbf{Parallel optimization}. Each network of the ensemble is trained independently of the others with the objective of Eq. (\ref{optim}). This approach allows parallel training of the ensemble. It is theoretically possible that each network ends up reaching the same optimum resulting in no ensemble diversity. However, we observe that this degenerate case never occurs in our experiments due to the random process of the optimization and aleatoric weights initialization.


\noindent \textbf{Regularization function}. We propose the following choice of regularization function: 
\begin{equation}
\mathcal{R}(\theta) = \frac{1}{d} \sum_{k=1}^d \log(\theta_k^2)
\end{equation}
With $\theta = (\theta_1, ..., \theta_k)$ the network weights. The use of the logarithmic function is motivated by the "water-filling" interpretation of DARE (cf. Section \ref{analysis}).


\noindent \textbf{Control threshold}. The control threshold $\tau$ should be chosen by the learner based on his targeted error level in-distribution. Smaller $\tau$ leads to smaller increase of the weights. For $\tau = -\infty$, DARE is equivalent to a vanilla deep ensemble. A selection heuristic for $\tau$ is proposed and discussed in Section \ref{expe}.




\noindent \textbf{Optimization algorithm} Each network is trained through batch gradient descent. The optimization algorithm is described in Algorithm \ref{alg-subsde}.

\begin{algorithm}
	\caption{Anti-Regularized Network}
	\label{alg-subsde}
    	\begin{algorithmic}[1]
    	    \STATE \textbf{Inputs}: Training set $\mathcal{S}$, threshold $\tau$, learning rate $\mu$
            \STATE \textbf{Outputs}: network $h_{\theta}$
            \STATE \textbf{Init}: $\theta \leftarrow \text{random}$
            \smallskip
    		\WHILE {stopping criterion is not reached}
    		\IF{$\mathcal{L}_{\mathcal{S}}(h_{\theta}) \leq \tau $}
    		\STATE $\theta \leftarrow \theta - \mu \nabla_{\theta} \left[ \mathcal{L}_{\mathcal{S}}(h_{\theta}) - \frac{1}{d} \sum_{k=1}^d \log (\theta_k^2) \right]$
    		\ELSE
    		\STATE $\theta \leftarrow \theta - \mu \nabla_{\theta} \left[ \mathcal{L}_{\mathcal{S}}(h_{\theta}) \right]$
    		\ENDIF
    		\smallskip
    		\ENDWHILE
	    \end{algorithmic}
\end{algorithm}

\section{Theoretical Analysis}
\label{analysis}

The purpose of the following theoretical analysis section is to support the fundamental claim that : "large weights do not activate on in-distribution samples while mostly activate for OOD, leading to larger output variance". We first provide theoretical insights on a simple case of linear networks which is then extended to multi-layer fully-connected networks. In a third part, an in-depth analysis of the layer activations is conducted on a syntetic dataset to illustrate this leveraged assumption.

\subsection{Theoretical insights on linear networks}

In the following, we propose theoretical insights for anti-regularized ensemble of linear networks (equivalent to linear regression). This theoretical result helps to understand the underlying dynamic of DARE. In particular, we show both following results which hold under appropriate assumptions:
\begin{itemize}
    \item The weights' variance of the DARE networks is inversely proportional to the variance of the input features (Theorem \ref{thm-wf}).
    \item As a consequence of the previous results, the DARE prediction variance is very sensitive to small deviations of low variance input features (Corollary \ref{cor}).
\end{itemize}
To formalize and demonstrate these results we consider the regression problem with scalar outputs ($q=1$) where $X \in \mathbb{R}^{n \times p}$ and $y \in \mathbb{R}^{n}$ are respectively the input and output data of the training set $\mathcal{S}$. We consider the ensemble of linear networks $\mathcal{H} = \{x \to x^T \theta; \theta \in \mathbb{R}^p\}$ such that, any $h_{\theta} \in \mathcal{H}$ is composed of two layers: the input and output layers of respective dimension $p$ and $1$. The loss function is the squared error such that for any $h_{\theta} \in \mathcal{H}$ we have $n \mathcal{L}_{\mathcal{S}}(h_{\theta}) = ||X \theta - y||^2_2$. To simplify the calculation without loosing in generality, we assume that the problem is linear, i.e. there exists $\theta^* \in \mathbb{R}^p$ such that $\mathcal{L}_{\mathcal{S}}(h_{\theta^*}) = 0$. We denote $s^2 = (s_1^2, ..., s_p^2) \in {\mathbb{R}_+^*}^p$ the diagonal of the matrix $\frac{1}{n} X^T X$. We now consider an anti-regularized ensemble $\mathcal{H}_{\tau}$ produced by Algorithm \ref{alg-subsde}. To characterize this ensemble, we make the following assumptions for any $h_{\theta} \in \mathcal{H}_{\tau}$:
\begin{equation}
\label{assum-bwe-1}
    \theta \sim \Theta_{\sigma^2};  \;  \;\mathbb{E}[\theta] = \theta^*, \; \text{Cov}(\theta) = \text{diag}(\sigma^2)
\end{equation}
\begin{equation}
\label{assum-bwe-2}
\mathbb{E}\left[\mathcal{L}_{\mathcal{S}}(h_\theta) \right] \leq  \delta \, \tau
\end{equation}
Where $\delta > 0$ and $\textnormal{diag}(\sigma^2)$ is the diagonal matrix of values $\sigma^2 \in \mathbb{R}_+^p$ verifying:
\begin{equation}
\label{assum-bwe-3}
    \sigma^2 = \argmax_{\sigma^2 = (\sigma_1^2, ..., \sigma_p^2)} \sum_{k=1}^p \log \left({\theta_k^*}^2 + \sigma_k^2 \right)
\end{equation}
As presented in Assumption (\ref{assum-bwe-1}), the anti-regularized ensemble distribution is described by the random variable $\theta$ centered in $\theta^*$ with variance $\sigma^2$. Assumption (\ref{assum-bwe-2}) implies that $P(\mathcal{L}_{\mathcal{S}}(h_\theta) \geq \tau) \leq \delta$, by Markov inequality, which models the fact that the loss of each member of DARE is maintained above a threshold $\tau$ thanks to the control process on $\lambda$ (cf Section \ref{optim-form}). Definition (\ref{assum-bwe-3}) approximates the impact of the anti-regularization term $-\mathcal{R}(\theta)$ in the DARE optimization formulation with an upper bound of $\max_{\sigma} \mathbb{E}[\mathcal{R}(\theta)]$. The weights are increased as much as possible while the loss stays under the threshold. 

Our first theoretical result shows that the weight variance of the anti-regularized ensemble is solution of a "water-filling" optimization problem \cite{boyd2006convex}, and is proportional to $1/s^2$, i.e. the inverse of the input features variance.
\begin{theorem}
\label{thm-wf}
There exist a constant $C > 0$ such that for any $k \in [|1, p|]$, the variance of the $k^{th}$ weight component is expressed as follows:
\begin{equation}
\label{thm-wf-eq}
    \sigma_k^2 = \max \left[ \frac{C \, \delta \, \tau}{s_k^2} - {\theta_k^*}^2, 0 \right]
\end{equation}
\end{theorem}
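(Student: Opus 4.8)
The plan is to reduce the statement to a standard water-filling convex program and solve it with the KKT conditions. The bridge between the probabilistic assumptions (\ref{assum-bwe-1})--(\ref{assum-bwe-3}) and this program is the expected in-distribution loss, so the first task is to rewrite that quantity as a linear budget on the variances $\sigma^2$.

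First I would exploit the linear-separability assumption $\mathcal{L}_{\mathcal{S}}(h_{\theta^*}) = 0$, which forces $X\theta^* = y$. Writing $u = \theta - \theta^*$, the squared-error loss collapses to $n\mathcal{L}_{\mathcal{S}}(h_\theta) = \|X(\theta - \theta^*)\|_2^2 = u^T X^T X\, u$, a pure quadratic form with no residual dependence on $y$. Taking expectation with $\mathbb{E}[u] = 0$ and $\text{Cov}(u) = \text{diag}(\sigma^2)$ from Assumption (\ref{assum-bwe-1}), the identity $\mathbb{E}[u^T A u] = \text{tr}(A\, \text{Cov}(u))$ applies; because $\text{Cov}(u)$ is diagonal, only the diagonal of $X^T X$ survives, giving $\sum_{k=1}^p (X^T X)_{kk}\,\sigma_k^2 = n \sum_{k=1}^p s_k^2 \sigma_k^2$. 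Dividing by $n$ yields $\mathbb{E}[\mathcal{L}_{\mathcal{S}}(h_\theta)] = \sum_{k} s_k^2 \sigma_k^2$, so Assumption (\ref{assum-bwe-2}) becomes the explicit linear budget $\sum_k s_k^2 \sigma_k^2 \leq \delta\tau$.

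Next I would combine this budget with the objective of (\ref{assum-bwe-3}): the problem defining $\sigma^2$ is
\[ \max_{\sigma^2 \geq 0} \; \sum_{k=1}^p \log\!\left({\theta_k^*}^2 + \sigma_k^2\right) \quad \text{s.t.} \quad \sum_{k=1}^p s_k^2 \sigma_k^2 \leq \delta\tau . \]
The objective is concave and the constraints are affine, so this is a convex (water-filling) program whose KKT conditions are necessary and sufficient. Introducing a multiplier $\nu \geq 0$ for the budget and $\mu_k \geq 0$ for the sign constraints, stationarity reads $1/({\theta_k^*}^2 + \sigma_k^2) - \nu s_k^2 + \mu_k = 0$. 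Since the objective is strictly increasing in every $\sigma_k^2$, the budget must be active at the optimum, hence $\nu > 0$. Complementary slackness $\mu_k \sigma_k^2 = 0$ then splits into the two usual regimes: when $\sigma_k^2 > 0$ we get $\sigma_k^2 = 1/(\nu s_k^2) - {\theta_k^*}^2$, and when this quantity would be negative the constraint pins $\sigma_k^2 = 0$; both are captured by $\sigma_k^2 = \max\!\left[1/(\nu s_k^2) - {\theta_k^*}^2,\, 0\right]$.

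Finally I would set $C = 1/(\nu\, \delta\tau) > 0$, a single positive constant independent of $k$ (it depends only on the global multiplier $\nu$ and the fixed budget $\delta\tau$), so that $1/(\nu s_k^2) = C\delta\tau / s_k^2$ and the expression matches the claimed $\sigma_k^2 = \max[C\delta\tau/s_k^2 - {\theta_k^*}^2,\, 0]$. I expect the main obstacle to lie in the first reduction rather than in the KKT analysis: one must use \emph{both} the separability (to eliminate $y$ and obtain a clean quadratic form in $\theta - \theta^*$) and the diagonal structure of $\text{Cov}(\theta)$ (so that the off-diagonal entries of $X^T X$ drop out and only $s^2$ remains) in order to land exactly on the budget $\sum_k s_k^2 \sigma_k^2$. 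Once that linear budget is in hand, the water-filling solution is textbook convex optimization.
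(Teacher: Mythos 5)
Your proposal is correct and follows essentially the same route as the paper's proof: reduce the expected loss to the linear budget $\sum_k s_k^2\sigma_k^2 \leq \delta\tau$ using $X\theta^*=y$ and the diagonal covariance, then solve the resulting water-filling program by KKT and set $C = 1/(\nu\,\delta\tau)$. Your trace-identity shortcut and your argument that the budget is active (hence $\nu>0$) are minor streamlinings of the same computation; the paper additionally assumes $z$ Gaussian, which, as your version shows, is not needed.
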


\begin{sproof}
A detailed proof is reported in Appendix. The proof consists in first noticing that $\mathbb{E}[\mathcal{L}_{\mathcal{S}}(h_{\theta})] = \sum_{k=1}^p s^2_k \sigma^2_k$. By combining this result with Assumptions \ref{assum-bwe-2} and \ref{assum-bwe-3}, we show that $\sigma^2$ is solution of the above water filling problem:
\begin{equation}
\label{proof-4}
\begin{split}
    \underset{\sigma^2 \in \mathbb{R}_+^p}{\textnormal{maximize}} &  \; \; \; \sum_{k=1}^p \log(\sigma_k^2 + {\theta^*_k}^2) \\
    \textnormal{subject to} & \; \; \;  \sum_{k=1}^p s_k^2 \sigma_k^2 \leq \delta \, \tau
\end{split}
\end{equation}
As the $\log$ function is strictly concave, and the constraints form a compact set on $\mathbb{R}^p$, the problem (\ref{proof-4}) has a unique solution which is given by (\ref{thm-wf-eq}).
\end{sproof}

The "water-filling" interpretation of the DARE optimization (\ref{proof-4}) is very insightful: $\delta \, \tau$ is the "global variance capacity" that can be dispatched to the network weights. As, it grows as a function of $\tau$, the more the learner accept a large error in-distribution, the more the global variance capacity increases. We stress that each weight component has a different "variance cost" equal to $s^2_k$: for high feature variance $s^2_k$, the increase of the corresponding weight variance $\sigma_k^2$ penalizes more the training loss. Thus, large weights appear more likely in front of low variance features. Notice also that, when ${\theta^*_k}^2$ is high, $ \frac{C \, \delta \, \tau}{s_k^2} - {\theta_k^*}^2$ is more likely to be negative, leading to $\sigma_k=0$ (cf Eq. (\ref{thm-wf-eq})). Besides, ${\theta_k^*}^2$ is generally higher for higher $s_k^2$ as it corresponds to more informative feature, enhancing the effect $\sigma_k=0$ for large $s^2_k$.


We see the importance of choosing a strictly concave function like the logarithm (cf Section \ref{impl}), if instead of log, we choose the identity function for instance, then the solution of (\ref{proof-4}) degenerates to $\sigma^2 = \left(0, ..., 0, \frac{\delta \, \tau}{s_p^2} \right)$ with $s^2_p$ the lowest feature variance. In this case, all the weight variance is assigned to one component, which reduces the likelihood to detect a deviation of a potential OOD input on another low variance feature.

From Theorem \ref{thm-wf}, we now derive the expression of the DARE prediction variance for any data $x \in \mathbb{R}^p$:

\begin{corollary}
\label{cor}
Let $\mathcal{H}_{\tau}$ be the large weights ensemble defined by Assumptions \ref{assum-bwe-1}, \ref{assum-bwe-2}, \ref{assum-bwe-3} and $x \in \mathbb{R}^p$ an input data, the variance of prediction for $h_{\theta} \in \mathcal{H}_{\tau}$ is:
\begin{equation}
    \underset{\theta \sim \Theta_{\sigma^2}}{\textnormal{Var}}\left[h_{\theta}(x) \right] = \sum_{k=1}^p x_k^2 \max \left[ \frac{C \, \delta \, \tau}{s_k^2} - {\theta_k^*}^2, 0 \right]
\end{equation}
\end{corollary}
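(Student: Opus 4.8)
The plan is to compute the prediction variance directly from the distributional assumptions on $\theta$ and then substitute the closed-form weight variances supplied by Theorem \ref{thm-wf}. The key observation is that, for a linear network, the prediction $h_{\theta}(x) = x^T \theta$ is a linear form in $\theta$, so its variance under $\theta \sim \Theta_{\sigma^2}$ depends only on the covariance structure of $\theta$ and on the fixed input $x$.

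First I would write $h_{\theta}(x) = \sum_{k=1}^p x_k \theta_k$, treating $x$ as deterministic and $\theta$ as the sole source of randomness. Applying the bilinearity of the covariance operator gives $\textnormal{Var}[h_{\theta}(x)] = \sum_{k=1}^p \sum_{j=1}^p x_k x_j \, \textnormal{Cov}(\theta_k, \theta_j)$. By Assumption \ref{assum-bwe-1}, the covariance matrix is diagonal, $\textnormal{Cov}(\theta) = \textnormal{diag}(\sigma^2)$, so every off-diagonal term vanishes and the double sum collapses to $\sum_{k=1}^p x_k^2 \, \sigma_k^2$.

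It then remains to replace each $\sigma_k^2$ by its explicit value. Theorem \ref{thm-wf} identifies the water-filling solution $\sigma_k^2 = \max[\, C \delta \tau / s_k^2 - {\theta_k^*}^2, \, 0\,]$, and substituting this into the sum yields exactly the claimed expression.

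The argument is essentially immediate once the variance of a linear form with diagonal covariance is written out, so there is no real analytic obstacle here; the only point requiring care is the legitimacy of combining the two inputs — the diagonal covariance structure from Assumption \ref{assum-bwe-1} and the optimal variances from Theorem \ref{thm-wf} — which is why the statement is rightly framed as a corollary rather than a standalone theorem. I would simply flag that the reduction to $\sum_k x_k^2 \sigma_k^2$ uses nothing beyond the assumed independence of the weight components, and that all the substantive content (the proportionality to $1/s_k^2$ and the thresholding at zero) is inherited from Theorem \ref{thm-wf}.
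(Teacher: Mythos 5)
Your proof is correct and follows the same route as the paper's: expand $h_{\theta}(x)=\sum_k x_k\theta_k$, use the diagonal covariance from Assumption \ref{assum-bwe-1} to collapse the variance to $\sum_k x_k^2\sigma_k^2$, and substitute the water-filling solution from Theorem \ref{thm-wf}. The only nitpick is that your closing remark invokes ``independence'' of the weight components, whereas the computation (and the assumption) only require that $\textnormal{Cov}(\theta)$ be diagonal, i.e.\ uncorrelatedness.
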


We deduce from Corollary \ref{cor} that the prediction variance for $x$ is large when the components $x^2_k$ are large for features with low variance $(s^2_k \ll 1)$. Thus, the predicted uncertainty of DARE is correlated with deviations in directions in which the training input data has small variance.


\subsection{Extending the results to multi-layer fully connected networks}

Transposed to the hidden layers of a multi-layer neural network, Theorem \ref{thm-wf} means that higher network weights likely appear in front of neurons which are weakly activated by the training data. To generalize Corollary \ref{cor} to the multi-layer case, we introduce an anti-regularized ensemble $\mathcal{H}_{\tau}$ of fully-connected deep neural networks with $L$ hidden layers and network weights distributed according to $\Theta_{\sigma^2}$ in $\mathbb{R}^d$. We denote $p_{\ell} > 0$, the number of neurons of the layer $\ell \in [0, L+1]$. We denote $g: \mathbb{R} \to \mathbb{R}$ the activation function applied at each hidden layer. For every $\ell \in [|0, L|]$, we denote $\phi_{(\ell)}(X) \in \mathbb{R}^{n \times p_{\ell}}$ the representation of the $\ell^{th}$ layer, with $\ell = 0$ the input layer. For any $j \in [|1, p_{\ell + 1}|]$, we denote $\theta_{(\ell, j)} \in \mathbb{R}^{p_{\ell}}$ the network weights between the layer $\ell$ and the $j^{th}$ component of the layer $\ell + 1$.

We assume that, for any random variable $Z$ in $\mathbb{R}$ there exists $\alpha > 0$ such that $\text{Var}[g(Z)] > \alpha \text{Var}[Z]$. This condition is true for the leaky-relu activation, and also true for other classic activations as relu, softplus and sigmoid if $Z$ is centered. We define $\phi_{(\ell)}^*(X) = \mathbb{E}[\phi_{(\ell)}(X)]$ and denote $s^2_{(\ell)}$ the diagonal of the matrix $\frac{1}{n} \phi_{(\ell)}^*(X)^T \phi_{(\ell)}^*(X)$.

We finally assume that, there is $\delta > 0$ such that, for any $\ell \in [|0, L|]$ and any $j \in [|1, p_{\ell+1}|]$, $\phi_{(\ell)}(X)$ and $\theta_{(\ell, j)}$ verify Assumptions (\ref{assum-bwe-1}), (\ref{assum-bwe-2}), (\ref{assum-bwe-3}) with $\theta \equiv \theta_{(\ell, j)}$, $X \equiv \phi_{(\ell)}^*(X)$, $y \equiv \phi_{(\ell)}^*(X) \theta_{(\ell, j)}^*$ and $\theta^* \equiv \theta_{(\ell, j)}^* \equiv \mathbb{E}[\theta_{(\ell, j)}]$

\begin{theorem}
\label{thm-multi}
There exists a constant $\gamma>0$ such that, for any $h_{\theta} \in \mathcal{H}_{\tau}$ and $x \in \mathbb{R}^p$, the prediction variance verifies:
\begin{equation*}
    \underset{\theta \sim \Theta_{\sigma^2}}{\textnormal{Var}}\left[h_{\theta}(x) \right] \geq \gamma \sum_{\ell = 0}^{L} \langle \phi_{(\ell)}^*(x)^2, \bar{\sigma}_{(\ell)}^2 \rangle
\end{equation*}
With, $ \bar{\sigma}_{(\ell)}^2 = \sum_{j=1}^{p_{\ell+1}} \max \left[ \frac{C_{(\ell, j)}}{s^2_{(\ell)}} - {\theta_{(\ell, j)}^*}^2, 0 \right]$; $\, C_{(\ell, j)} \in \mathbb{R}^*_+$
\end{theorem}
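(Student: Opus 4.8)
The plan is to reduce the multi-layer statement to a repeated application of Corollary~\ref{cor} to each layer, glued together by the law of total variance and the activation hypothesis $\text{Var}[g(Z)] > \alpha\,\text{Var}[Z]$. Throughout I treat the output as scalar ($p_{L+1}=1$) and use the stated assumption that the weights $\theta_{(\ell,j)}$ of distinct layers (and distinct output neurons $j$) are independent, each carrying the diagonal water-filling covariance furnished by Theorem~\ref{thm-wf}, namely $\sigma^2_{(\ell,j),k} = \max[\,C_{(\ell,j)}/s^2_{(\ell),k} - {\theta^*_{(\ell,j),k}}^2,\,0\,]$ with $C_{(\ell,j)} = C\,\delta\,\tau$.

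First I would isolate the variance injected by a single layer. Fix $\ell$ and condition on the representation $\phi_{(\ell)}(x)$, which is a function of the earlier weights only and is therefore independent of $\theta_{(\ell,\cdot)}$. For the $j$-th pre-activation $z_{(\ell+1,j)}(x) = \theta_{(\ell,j)}^T\phi_{(\ell)}(x)$, the law of total variance gives
\begin{equation*}
\text{Var}[z_{(\ell+1,j)}(x)] = \mathbb{E}_{\phi}\big[\textstyle\sum_k \phi_{(\ell),k}(x)^2\,\sigma^2_{(\ell,j),k}\big] + \text{Var}_{\phi}\big[{\theta^*_{(\ell,j)}}^{T}\phi_{(\ell)}(x)\big],
\end{equation*}
where the first (conditional) term is computed exactly as in Corollary~\ref{cor}. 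By Jensen's inequality $\mathbb{E}[\phi_{(\ell),k}(x)^2]\ge\phi^*_{(\ell),k}(x)^2$, so summing over $j$ lower-bounds the first term by $\langle\phi^*_{(\ell)}(x)^2,\bar\sigma^2_{(\ell)}\rangle$, which is precisely the $\ell$-th summand of the target inequality. The second, ``propagated'' term is nonnegative and carries the variance accumulated by the earlier layers.

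The core of the argument is then a forward (equivalently, Doob-martingale) recursion showing that these per-layer injected variances accumulate additively up to a multiplicative constant. Unrolling $\text{Var}[h_\theta(x)] = \text{Var}[z_{(L+1)}(x)]$ with the decomposition above, the injected term at layer $L$ appears directly, while the propagated term ${\theta^*_{(L)}}^{T}\phi_{(L)}(x)$ must be traced back through layer $L$: writing $\phi_{(L),m}(x)=g(z_{(L,m)}(x))$ and invoking $\text{Var}[g(Z)]>\alpha\,\text{Var}[Z]$ shows that a constant fraction $\alpha$ of the variance present at layer $L$ survives, so the injected terms of layers $\ell<L$ reappear, each attenuated by at most one factor $\alpha$ per traversed activation. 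Collecting the worst cumulative attenuation $\alpha^{L}$ into a single constant $\gamma>0$ then yields the claimed sum over $\ell=0,\dots,L$.

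The step I expect to be the main obstacle is controlling the propagated term through the linear recombinations ${\theta^*_{(\ell,j)}}^{T}\phi_{(\ell)}(x)$: because $\phi_{(\ell)}(x)$ is a vector of correlated neuron activations and $\theta^*$ may mix them with opposing signs, cross-covariances could in principle cancel and shrink the variance, so a naive per-neuron bound does not immediately pass through. This is exactly where the activation hypothesis and the independence of the weights across layers and neurons are used — they guarantee that, conditionally, distinct neurons contribute non-negatively and that $\text{Var}[g(\cdot)]$ cannot collapse — and where the freedom to absorb all dimensional and attenuation factors into the unspecified constants $C_{(\ell,j)}$ and $\gamma$ keeps the final bound clean.
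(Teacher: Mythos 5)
Your overall strategy --- per-layer variance injection plus a forward recursion through the activations, with the cumulative attenuation $\alpha^L$ absorbed into $\gamma$ --- is the same as the paper's, and your law-of-total-variance decomposition conditioned on $\phi_{(\ell)}(x)$ is an exact (indeed cleaner) version of the paper's per-layer step. The problem is the step you yourself flag as the main obstacle: you never actually close it, and as written it does not close. Your propagated term is $\text{Var}\bigl[{\theta^*_{(\ell,j)}}^T\phi_{(\ell)}(x)\bigr]=\sum_{k,k'}\theta^*_{(\ell,j,k)}\theta^*_{(\ell,j,k')}\text{Cov}\bigl(\phi_{(\ell)}(x)_k,\phi_{(\ell)}(x)_{k'}\bigr)$, which can be arbitrarily small --- zero if $\theta^*_{(\ell,j)}=0$, or if cross-covariances cancel --- so the variance accumulated in earlier layers need not survive to the output in your recursion. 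Neither the activation hypothesis (which concerns a single scalar $g(Z)$) nor the independence of the weights across layers and neurons fixes this: the coordinates of $\phi_{(\ell)}(x)$ remain correlated through the shared earlier-layer weights, and the conditional mean only sees $\theta^*$, not the weight randomness.

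The paper's proof avoids this in two ways that your sketch drops. First, it explicitly assumes the products $\phi_{(\ell)}(x)_k\theta_{(\ell,j,k)}$ are independent across $k$, so that $\text{Var}[\sum_k\phi_{(\ell)}(x)_k\theta_{(\ell,j,k)}]\geq\sum_k\text{Var}[\phi_{(\ell)}(x)_k\theta_{(\ell,j,k)}]$ with no cross terms. Second, and more importantly, it keeps the full product-variance identity $\text{Var}[XY]=\text{Var}[X]\text{Var}[Y]+\text{Var}[X]\mathbb{E}[Y]^2+\text{Var}[Y]\mathbb{E}[X]^2$, so the coefficient multiplying $\text{Var}[\phi_{(\ell)}(x)_k]$ in the recursion is $\sigma^2_{(\ell,j,k)}+{\theta^*_{(\ell,j,k)}}^2=\max[C_{(\ell,j)}/s^2_{(\ell),k},\,{\theta^*_{(\ell,j,k)}}^2]$, which Theorem 1 makes strictly positive even where $\theta^*$ vanishes; this is what guarantees a nonzero fraction of each layer's variance propagates forward. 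In your version, the $\text{Var}[\phi_{(\ell)}(x)_k]\,\sigma^2_{(\ell,j,k)}$ part of that coefficient sits inside $\mathbb{E}[\phi_{(\ell)}(x)_k^2]\,\sigma^2_{(\ell,j,k)}$, and you discard it with Jensen's inequality to extract the $\langle\phi^*_{(\ell)}(x)^2,\bar\sigma^2_{(\ell)}\rangle$ summand, leaving all propagation to the cancellable term. To repair the argument, keep the slack $\mathbb{E}[\phi_k^2]=\phi_k^{*2}+\text{Var}[\phi_k]$, invoke the same per-coordinate independence the paper uses to reduce $\text{Var}[{\theta^*}^T\phi]$ to $\sum_k{\theta^*_k}^2\text{Var}[\phi_k]$, and propagate with the strictly positive constant $\min_k(\sigma^2_{(\ell,j,k)}+{\theta^*_{(\ell,j,k)}}^2)$; then your recursion closes exactly as in the paper.
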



Theorem \ref{thm-multi} states that the prediction variance for any $x \in \mathcal{X}$ is a growing function of $\langle \phi_{(\ell)}^*(x)^2,   1/ s_{(\ell)}^2 \rangle$, which means that the uncertainty predicted by DARE is high if one hidden neuron weakly activated by the training data is strongly activated by $x$. Figure \ref{illustration} shows an illustration of the expected behavior of DARE for one OOD data $x_{ood}$. The underlying 
assumption is that the distance $D$ between $x_{ood}$ and the learning domain is expressed on one hidden feature with low variance for the training data.







\begin{figure}[ht]
\center
\includegraphics[width=0.8\linewidth]{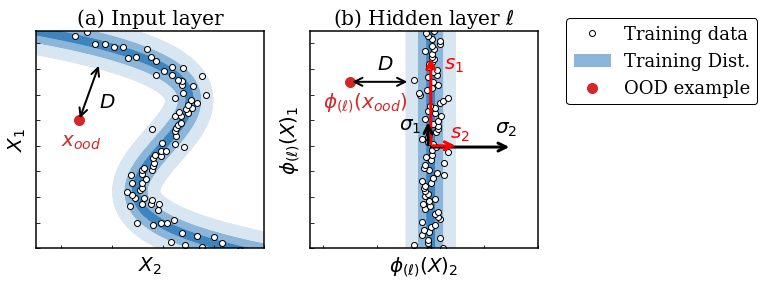}
\caption{\textbf{Schematic description of DARE behavior}: an OOD data $x_{ood}$ is distant from $D>0$ from the training domain. In a hidden layer $\ell$ of the network $h_{\theta}$, the training domain is composed of a strong component $\phi_{(\ell)}(X)_1$ of variance $s^2_1 \gg 1$ and a weak component $\phi_{(\ell)}(X)_2$ of variance $s^2_2 \ll 1$. In this representation, $\phi_{(\ell)}(x_{ood})$ is distant from $D$ to the training domain in the weak direction $\phi_{(\ell)}(X)_2$, resulting in $\text{Var}[h_{\theta}(x_{ood})] \geq \gamma D^2 \sigma^2_2 \simeq \frac{\gamma D^2}{s_2^2} \gg 1$.}
\label{illustration}
\end{figure}

\subsection{Verification on a synthetic dataset}

We consider the "two-moons" binary classification dataset and use a three hidden-layer network with 100 neurons each and ReLU activations as base network. We use a linear activation for the output layer and the mean squared error as loss function. As we consider a classification task, uncertainty can be obtained with a single network through the formula: $\min\left( h_{\theta}(x)^2, (1-h_{\theta}(x))^2 \right)$. The DARE uncertainty is the sum of the individual network members uncertainty plus the prediction variance between members.

Figure \ref{intern-analysis} presents the predicted uncertainty heat-map for one DARE network as well as the internal layer representations. We observe that the OOD component values grow from one layer to the next. A correspondence between features with low variance for the training data and large weights can be clearly observed. In the last hidden layer (layer 3), the OOD components are large in direction of low training variance (components 80 to 100) to which correspond large weights. This observation explains the large uncertainty score for the OOD example.


\begin{figure*}
    \centering
    \begin{minipage}{0.35\textwidth}
        \centering
        \includegraphics[width=\linewidth]{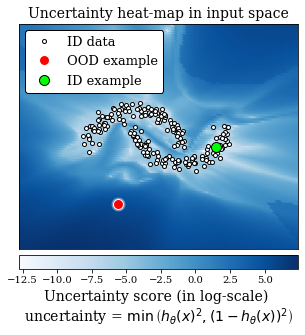} 
    \end{minipage}
    \begin{minipage}{0.64\textwidth}
        \centering
        \includegraphics[width=\linewidth]{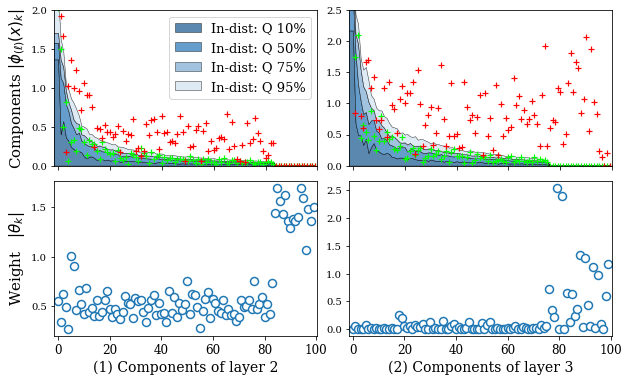}
    \end{minipage} \hfill
    \caption{\textbf{Internal analysis of a large weights network} The uncertainty produced by one member of an anti-regularized ensemble is presented on the left. On the right, the two figures on top present the expression of the training distribution in the three hidden layers (in blue) compared to the representation of one OOD example (in red). The components are sorted in descending order of training variance. The two bottom figures present the average weight in front of each component, i.e. the mean weights that multiply the layer components to produce the next layer representation.}
    \label{intern-analysis}
\end{figure*}

\section{Related Works}
\label{related}


Increasing ensemble diversity has been an enduring paradigm since the early days of the ensemble learning research. At first, diversity was seen as a key feature for improving the generalization ability of the ensembles \cite{brown2005managingdiversityensemble,kuncheva2003DiversityEnsemble,liu1999negativecorrelation,zhang2008ConstraintProjectionEnsemble}. Then, with the growing interest in uncertainty quantification, the primary objective of ensemble diversity becomes to produce good uncertainty estimates. In this perspective, a first category of methods propose to increase diversity by using diverse architectures or training conditions among an ensemble of deep neural networks \cite{Lakshminarayanan2017DeepEnsemble,Wen2020batchensemble,Wenzel2020hyperDeepEnsemble,Zaidi2021NIPSNES}. The underlying idea is that the diversity of architecture or local minima reached by the different networks induces a diversity of predictions. Another category of methods proposes to explicitly impose a diversity constraint in the loss function of the networks. The loss function is then written $\mathcal{L} + \lambda \mathcal{P}$ where $\mathcal{L}$ is the loss for the task (e.g. mean squared error or negative log-likelihood (NLL)), $\mathcal{P}$ is a penalty term which decreases with the diversity of the ensemble and $\lambda$ is the trade-off parameter between the two terms. Three kinds of penalization are distinguished in the literature. The first kind makes use of training data to compute the penalty term. It includes the Negative Correlation method (NegCorr) \cite{shui2018negativecorrelation,zhang2020NegCorr} which applies the penalization from \cite{liu1999negativecorrelation} to deep ensembles to enforce a negative correlation between the errors of the networks on the training set. Similarly, \cite{ross2020EnsemblesLocallyIndependant} imposes an orthogonality constraint between the gradients of the ensemble members on training data. Penalizing the similarity between hidden representations of the networks has also been proposed by \cite{rame2021diceUncertainty,Sinha2021DIBS} using adversarial training. The second kind of penalization refers to contrastive methods that enforces diversity on potential OOD instances rather than training data. This avoids the issue of being over-conditioned by the training domain that can be encountered by previous methods. In this category, several methods suppose that an unlabeled sample containing OOD is available, \cite{pagliardini2022DBAT,Tifrea2022semiSupervisedOOD}. Others avoid this restrictive assumption and simulate potential OOD with random uniform data \cite{Jain2020MOD,Mehrtens2022MODplus} or instances localized around the training data \cite{Segonne2022OODpseudoInputs}. In the last approach, considered by Anchored-Networks (AnchorNet) \cite{pearce2018AnchorNetwork} and Repulsive Deep Ensemble (RDE) \cite{Angelo2021RepulsiveDeepEnsemble}, the penalization $\mathcal{P}$ is a function of the network's parameters which forces the networks to reach local minima spaced from each other in parameters space. Our proposed DARE approach relates particularly to these two methods. Our assumption is that imposing weights diversity has more chance to obtain a global output diversity rather than imposing it on specific data regions as done by the two previous kind of penalization. Anchored-Networks appears to be an efficient tool, for instance, in the detection of corrupted data \cite{ulmer2020trustIssue}, however, it is very hard to set the right anchors and trade-off parameter \cite{scalia2020moleculeOODuncertainty}. Large initial variance can lead to large weight variance but may not converge to accurate model in-distribution. The DARE approach is more practical as it starts to increase the weights after reaching an acceptable loss threshold which ensures to fit the training data.

\section{Experiments}
\label{expe}

The experiments are conducted on both regression and classification datasets. In the majority of previous works, OOD uncertainty quantification is studied in the perspective of OOD detection in the classification setting where examples from other classes / datasets are considered as OOD \cite{Angelo2021RepulsiveDeepEnsemble,Lakshminarayanan2017DeepEnsemble,Liu2022DistanceAwarness,vanAmersfoort2020DUQ}. For regression, few attempts of uncertainty quantification on shifted datasets have been conducted: \cite{Jain2020MOD} separates male and female faces for age prediction dataset and \cite{foong2019inbetweenUncertainty,Jain2020MOD,Segonne2022OODpseudoInputs} propose OOD version of several UCI regression datasets \cite{Dua2019UCI}. In this work, we propose a novel regression setup for uncertainty estimations on shifted domains based on the CityCam dataset \cite{Zhang2017WebCamT} which has been used in several domain adaptation regression experiments \cite{deMathelin2020WANN,Zhao2018MultiSourceDANN}. Our setup models real-life domain shift scenarios where uncertainty quantification is challenged and offers a clear visual understanding of the domain shifts (cf Figure \ref{citycam-setup}). For the classification experiments, we consider the OOD detection setup developed in \cite{Angelo2021RepulsiveDeepEnsemble}.  The source code of the experiments is available on GitHub\footnote{\url{https://github.com/antoinedemathelin/DARE}}.



\subsection{Regression experiments on CityCam}

We consider the CityCam dataset \cite{Zhang2017WebCamT}. This dataset is composed of images recorded from different traffic cameras. The task consists in counting the number of vehicles present on the image, which is useful, for instance, to control the traffic in the city. To get relevant features for the task, we use the features of the last layer of a ResNet50 \cite{He2016ResNet} pretrained on ImageNet \cite{deng2009imagenet}. We propose three different kinds of domain shift:

\begin{figure}[ht]
\center
\includegraphics[width=0.6\linewidth]{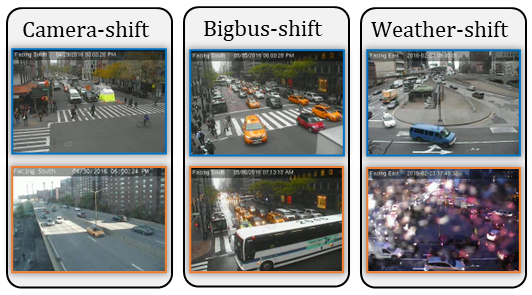}
\caption{\textbf{CityCam experimental setups}. The top blue images correspond to in-distribution examples and bottom orange images to OOD examples.}
\label{citycam-setup}
\end{figure}

\textbf{1. Camera-shift}: This experiment uses the images from 10 cameras in the CityCam dataset. For each round, 5 cameras are randomly selected as in-distribution while the 5 remaining cameras are considered as out-of-distribution. 



\textbf{2. Bigbus-shift}: Images marked as "big-bus" referring to the fact that a bus appears and masks a significant part of the image \cite{Zhang2017WebCamT} are used to create the OOD dataset.


\textbf{3. Weather-shift}: Blurry images caused by water drops landed on the camera are used as OOD dataset.


These three experiments model real-life uncertainty quantification problems as the generalization of uncertainty estimates to unseen domains (camera-shift), the robustness to changes in data acquisition (weather-shift) and the detection of rare abnormal event (bigbus-shift). Further details on these experimental setups are provided in Appendix.

We consider the following competitors: \textbf{Deep-Ensemble} (\textbf{DE}) \cite{Lakshminarayanan2017DeepEnsemble}, \textbf{NegCorr} \cite{shui2018negativecorrelation}, \textbf{AnchorNet} \cite{pearce2018AnchorNetwork}, \textbf{MOD} \cite{Jain2020MOD} and \textbf{RDE} \cite{Angelo2021RepulsiveDeepEnsemble}. All are deep ensemble methods which focus on increasing the diversity among members. AnchorNet, NegCorr and MOD introduce a penalty term in the loss function multiplied by a trade-off parameter $\lambda$. The trade-off $\lambda$ and the anchor initialization parameter $\sigma$ for AnchorNet are selected through hold-out validation, as suggested in \cite{Jain2020MOD,pearce2018AnchorNetwork}. The kernel bandwidth for RDE is adaptive and chosen with the median heuristic as suggested in the corresponding work \cite{Angelo2021RepulsiveDeepEnsemble}. The parameter $\tau$ of DARE is chosen equal to $1 + \delta$ times the DE validation loss. The underlying heuristic is to maintain the in-distribution performance level close to that of the vanilla DE. However, a small penalty $\delta > 0$ should be accepted to allow the weight increase. In these experiments, we intuitively select $\delta = 0.25$ without further fine-tuning. An ablation study of the impact of the $\delta$ parameter on the DARE performances is presented in Appendix.


The experiments are performed with ensemble of $5$ fully-connected networks with $3$ hidden layers of $100$ neurons each and ReLU activations. The Adam optimization algorithm is used with learning rate $0.001$ \cite{Kingma2014Adam}. The batch size is chosen equal to $128$. The experiments are repeated $5$ times to compute standard deviations for the scores. We use the NLL defines in \cite{Lakshminarayanan2017DeepEnsemble} as loss function.

Choosing the right number of optimization epochs to perform is not trivial. Previous works have used a fixed number of epochs \cite{pearce2018AnchorNetwork} or early stopping \cite{Jain2020MOD}. For DARE, a natural checkpoint strategy is to save the network if the validation loss of the current epoch is below the threshold $\tau$. This process ensures a targeted level of in-distribution performance and allows the weight increase during several epochs. For the competitors, we 
observed that restoring the weights of the best validation loss epoch provides the best results.


\begin{table}[ht]
\scriptsize
\begin{tabularx}{\linewidth}{X|YY|YY|YY}
\toprule
& \multicolumn{2}{c|}{Camera-shift } & \multicolumn{2}{c|}{Bigbus-shift} & \multicolumn{2}{c}{Weather-shift } \\
Methods  &                   NLL &                   ECE &                   NLL &                   ECE &                   NLL &                   ECE \\
\midrule
DE        &  6.04 (0.40) &  0.28 (0.02) &  3.66 (0.17) &  0.28 (0.01) &   2.25 (0.09) &  0.12 (0.02) \\
AnchoreNet     &  6.03 (0.24) &  0.28 (0.02) &  3.86 (0.16) &  0.29 (0.01) &   2.25 (0.12) &  0.12 (0.03) \\
MOD                 &  6.24 (0.89) &  0.29 (0.02) &  3.66 (0.21) &  0.29 (0.01) &   2.32 (0.17) &  0.15 (0.03) \\
NegCorr &  5.97 (0.54) &  0.29 (0.03) &  4.02 (0.27) &  0.29 (0.01) &   2.30 (0.10) &  0.14 (0.02) \\
RDE                 &  5.05 (0.25) &  0.27 (0.02) &  3.67 (0.11) &  0.28 (0.00) &   \textbf{2.22 (0.14)} &  0.13 (0.04) \\
DARE                 &  \textbf{3.64 (0.34)} &  \textbf{0.23 (0.03)} &  \textbf{2.99 (0.13)} &  \textbf{0.25 (0.01)} &   2.33 (0.13) &  \textbf{0.07 (0.04)} \\
\bottomrule
\end{tabularx}
\caption{\textbf{Out-of-distribution NLL and ECE for CityCam}}
\label{citycam-table1}
\end{table}


\begin{table}[ht]
\scriptsize
\center
\begin{tabularx}{\linewidth}{X|YYYYYY}
\toprule
{} &           DE &    AnchorNet &          MOD &      NegCorr &          RDE &         DARE \\
\midrule
Cameras &  1.03 (0.08) &  \textbf{1.00 (0.07)} &  1.04 (0.06) &  1.04 (0.06) &  1.06 (0.06) &  1.18 (0.09) \\
Bigbus  &  0.89 (0.03) &  \textbf{0.87 (0.02)} &  0.91 (0.04) &  0.89 (0.06) &  0.91 (0.03) &  1.06 (0.04) \\
Weather &  1.25 (0.04) &  \textbf{1.24 (0.05)} &  1.26 (0.05) &  1.26 (0.06) &  1.28 (0.05) &  1.27 (0.06) \\
\bottomrule
\end{tabularx}
\caption{\textbf{In-distribution NLL for CityCam}}
\label{citycam-table2}
\end{table}


\begin{figure}[ht]
\center
\includegraphics[width=0.8\linewidth]{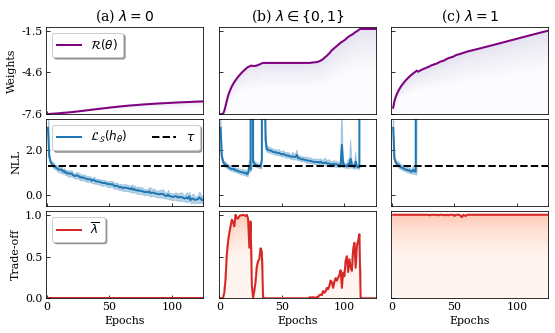}
\caption{\textbf{Training behaviors for different $\lambda$ settings}. Evolution of the weights, the training loss and the average trade-off parameter over the epochs ((a) $\equiv$ DE and (b) $\equiv$ DARE).}
\label{citycam-optim}
\end{figure}

To assess the ensemble quality for the regression experiments, we consider the NLL and expected calibration error (ECE) computed on the out-of-distribution dataset as used in the works \cite{Lakshminarayanan2017DeepEnsemble}, \cite{Mehrtens2022MODplus}. The results are averaged over $5$ runs and reported in Table \ref{citycam-table1}. We observe that DARE outperforms other uncertainty quantification methods in both term of NLL and ECE for the bigbus-shift and camera-shift experiments. For the weather-shift experiment, DARE provides the best ECE score. Moreover, these gains in uncertainty estimation come with little penalization of the performance in-distribution as reported in Table \ref{citycam-table2}: in the camera-shift experiment for instance, the in-distribution NLL of DARE is increased of only $0.18$ compare to the deep ensemble method, while the NLL out-of-distribution is decreased of $2.4$ (cf Table \ref{citycam-table1}). The impact of the $\lambda$ control in the DARE optimization is presented in Figure \ref{citycam-optim}, we observe that the setting $\lambda = 0$ (equivalent to DE) leads to a small weight increase whereas the setting $\lambda = 1$ leads to a loss explosion. The control setting $\lambda \in \{0, 1\}$ allows the weight increase while maintaining the loss above $\tau$.

\subsection{Classification Experiments}

We consider the experimental setup defines in \cite{Angelo2021RepulsiveDeepEnsemble} for OOD detection on Fashion-MNIST and CIFAR10. The MNIST dataset is used as OOD dataset for Fashion-MNIST and the SVHN dataset for CIFAR10. We extend the experiments by adding CIFAR10 as OOD for Fashion-MNIST and CIFAR100 as OOD for CIFAR10. Thus, for both experiments, OOD detection is performed on both "Near-OOD" and "Far-OOD" datasets \cite{Liu2022DistanceAwarness}.

We first observed that a direct application of DARE to this setup leads to negative results. The first issue comes from the softmax activation at the end layer, which cancels the effect of increasing the weights. Indeed, the softmax activation inverses the correlation between large outputs and high uncertainty, resulting in over-confidence for OOD data. A second issue comes from the use of convolutional layers in the CIFAR10 experiment. These layers have fewer parameters than fully-connected ones, which reduces the possibility of increasing the weights in front of weakly activated neurons. 

To overcome these difficulties, considering that DARE is efficient for fully-connected networks and linear end-activation, we remove the softmax activation at the end-layer of the DARE networks and use the mean squared error. Uncertainty scores are then obtained by computing the distance between the predicted logits and the one-hot-encoded vector of the predicted class. For the CIFAR10 experiment, we use a pretrained ResNet32 and replace the end layer by deep ensemble of fully-connected networks. This setup is more practical, as only one ResNet32 is required, which reduces the need of computational resources \cite{lee2015multihead}.

The obtained results are reported in Table \ref{class-table} for DARE and the competitors. To fully evaluate the impact of the DARE optimization, we add the results obtained with a Deep Ensemble trained with the mean squared error (DE (MSE)) which is equivalent to a DARE with $\lambda = 0$. We train $5$ networks in each ensemble and repeat the experiments 5 times. The AUROC metric is used, computed with the the uncertainty score defined previously for DARE and DE (MSE) and the entropy for the other methods. We observe that DARE globally improves the OOD detection. For instance, in Fashion-MNSIT, we observe an improvement of $8$ points on CIFAR10 and $34$ points on MNIST, with a lost of only $2$ points of in-distribution accuracy.

\begin{table}[ht]
\scriptsize
\begin{tabularx}{\linewidth}{X|YYY|YYY}
\toprule
& \multicolumn{3}{c|}{CIFAR10} & \multicolumn{3}{c}{Fashion-MNIST} \\
Methods & SVHN & CIFAR100 & Accuracy &   CIFAR10 &  MNIST &     Accuracy \\
\midrule
DE (NLL)        &          90.9 (0.4) &              86.4 (0.2) &  \textbf{91.8 (0.1)} &            89.7 (0.9) &    62.7 (6.2) &  \textbf{89.2 (0.2)} \\
AnchorNet     &          91.0 (0.3) &              \textbf{86.5 (0.2)} &  \textbf{91.8 (0.0)} &            88.8 (1.1) &    68.7 (6.2) &  89.1 (0.2) \\
MOD                 &          91.3 (0.3) &              86.3 (0.3) &  91.7 (0.1) &            89.4 (1.7) &    60.8 (2.7) &  88.7 (0.4) \\
NegCorr &          91.3 (0.4) &              86.3 (0.4) &  91.7 (0.1) &            91.5 (0.8) &    68.9 (4.5) &  86.1 (0.6) \\
RDE                 &          91.2 (0.5) &              86.4 (0.3) &  \textbf{91.8 (0.1)} &            90.1 (0.9) &    70.9 (5.8) &  89.1 (0.3) \\
DE (MSE)     &          85.9 (1.2) &              77.7 (0.8) &  91.7 (0.1) &            96.5 (0.5) &    93.0 (5.3) &  88.6 (0.1) \\
DARE                 &          \textbf{92.6 (0.7)} &              82.7 (0.5) &  \textbf{91.8 (0.1)} &            \textbf{97.7 (0.5)} &    \textbf{97.4 (1.3)} &  87.2 (0.2) \\
\bottomrule
\end{tabularx}
\caption{\textbf{OOD detection results}. AUROC scores and ID accuracy are reported}
\label{class-table}
\end{table}

\section{Limitations and Perspectives}

For now, the efficiency of DARE is limited to fully-connected neural network with piece-wise linear activation. We have seen, however, that DARE can benefit to a final fully-connected network head placed on top of deep features obtained with convolutions. Thanks to the practical aspect of DARE, the method can be combined to other deep ensemble or OOD detection methods. One can use a specific training process and then apply DARE afterwards to increase diversity. Future work will consider a Bayesian version of DARE by adding gaussian noise with increasing variance to the weights of pretrained networks.

\clearpage

\appendix

{\parindent0pt

\section{Proof of Theorem 1}

We first remind the assumptions and notations used for the Theorem:

\noindent \textbf{Notations}:
\begin{itemize}
    \item $X \in \mathbb{R}^{n \times p}$ and $y \in \mathbb{R}^{n}$ are respectively the input and output data which compose the training set $\mathcal{S}$.
    \item $\mathcal{H} = \{x \to x^T \theta; \theta \in \mathbb{R}^p\}$ is the ensemble of linear networks.
    \item $\mathcal{L}_{\mathcal{S}}(h_{\theta}) = \frac{1}{n} ||X \theta - y||^2_2$ is the mean squared error.
    \item $s^2 = (s_1^2, ..., s_p^2) \in {\mathbb{R}_+^*}^p$ is the diagonal of the matrix $\frac{1}{n} X^T X$.
    \item $\Theta_{\sigma^2}$ is the weights distribution of the anti-regularized ensemble $\mathcal{H}_{\tau} \subset \mathcal{H}$, with $\sigma^2$ the weights variance.
\end{itemize}

\noindent \textbf{Assumptions}:
\begin{equation}
\label{app-assum-bwe-0}
    \exists \, \theta^* \in \mathbb{R}^p; X \theta^* = y
\end{equation}
For any $h_{\theta} \in \mathcal{H}_{\tau}$:
\begin{equation}
\label{app-assum-bwe-1}
    \theta \sim \Theta_{\sigma^2};  \;  \;\mathbb{E}[\theta] = \theta^*, \; \text{Cov}(\theta) = \text{diag}(\sigma^2)
\end{equation}
\begin{equation}
\label{app-assum-bwe-2}
\mathbb{E}\left[\mathcal{L}_{\mathcal{S}}(h_\theta) \right] \leq  \delta \, \tau
\end{equation}
Where $\delta > 0$ and $\textnormal{diag}(\sigma^2)$ is the diagonal matrix of values $\sigma^2 \in \mathbb{R}_+^p$ verifying:
\begin{equation}
\label{app-assum-bwe-3}
    \sigma^2 = \argmax_{\sigma^2 = (\sigma_1^2, ..., \sigma_p^2)} \sum_{k=1}^p \log \left({\theta_k^*}^2 + \sigma_k^2 \right)
\end{equation}

\begin{theorem}
\label{app-thm1}
There exist a constant $C > 0$ such that for any $k \in [|1, p|]$, the variance of the $k^{th}$ weight component is expressed as follows:
\begin{equation}
\label{app-thm-wf-eq}
    \sigma_k^2 = \max \left[ \frac{C \, \delta \, \tau}{s_k^2} - {\theta_k^*}^2, 0 \right]
\end{equation}
\end{theorem}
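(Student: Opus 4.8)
The plan is to follow the three stages indicated in the sketch: turn the two assumptions into an explicit constrained program, recognize it as a water-filling problem with a unique solution, and extract the closed form through the KKT conditions.

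First I would express the expected in-distribution loss as a function of $\sigma^2$. Writing $u = \theta - \theta^*$, Assumption (\ref{app-assum-bwe-1}) gives $\mathbb{E}[u] = 0$ and $\mathbb{E}[uu^T] = \text{diag}(\sigma^2)$, while Assumption (\ref{app-assum-bwe-0}) gives $y = X\theta^*$, so that $X\theta - y = Xu$. Hence
\begin{equation*}
\mathbb{E}[\mathcal{L}_{\mathcal{S}}(h_\theta)] = \frac{1}{n}\mathbb{E}[u^T X^T X u] = \frac{1}{n}\text{tr}\!\left(X^T X\, \text{diag}(\sigma^2)\right) = \sum_{k=1}^p s_k^2 \sigma_k^2,
\end{equation*}
using that the diagonal of $\frac{1}{n}X^TX$ equals $s^2$. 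Combining this identity with Assumption (\ref{app-assum-bwe-2}) shows that the admissible variances are exactly those with $\sum_k s_k^2\sigma_k^2 \leq \delta\tau$, and Assumption (\ref{app-assum-bwe-3}) selects, among these, the vector maximizing $\sum_k \log(\sigma_k^2 + {\theta_k^*}^2)$. This is precisely the water-filling problem (\ref{proof-4}).

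Next I would settle existence and uniqueness. The feasible set $\{\sigma^2 \in \mathbb{R}_+^p : \sum_k s_k^2\sigma_k^2 \leq \delta\tau\}$ is compact, since each $s_k^2 > 0$, and convex, while $\sigma^2 \mapsto \sum_k \log(\sigma_k^2 + {\theta_k^*}^2)$ is continuous and strictly concave as a separable sum of strictly concave terms. A strictly concave continuous function attains its maximum at a single point of a nonempty compact convex set, so the solution is well defined. I would then write the KKT conditions with a multiplier $\mu \geq 0$ for the budget constraint and $\nu_k \geq 0$ for $\sigma_k^2 \geq 0$; stationarity reads $\frac{1}{\sigma_k^2 + {\theta_k^*}^2} = \mu s_k^2 - \nu_k$. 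Because the objective is strictly increasing in each $\sigma_k^2$, the budget is saturated at the optimum, which forces $\mu > 0$. Complementary slackness then splits into two cases: where $\sigma_k^2 > 0$ one has $\nu_k = 0$ and $\sigma_k^2 = \frac{1}{\mu s_k^2} - {\theta_k^*}^2$; where $\sigma_k^2 = 0$ one checks that $\frac{1}{\mu s_k^2} \leq {\theta_k^*}^2$. Both cases combine into $\sigma_k^2 = \max\!\left[\frac{1}{\mu s_k^2} - {\theta_k^*}^2, 0\right]$, and setting $C = \frac{1}{\mu\,\delta\,\tau} > 0$ gives exactly (\ref{app-thm-wf-eq}).

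The main obstacle is less the computation than the interpretive reduction in the first step: Assumption (\ref{app-assum-bwe-3}) as literally written is an unconstrained $\argmax$ of a function unbounded above, so the argument relies on reading it jointly with the loss bound (\ref{app-assum-bwe-2}) to produce the compact feasible set. Once the constrained program is in place, the delicate remaining points are confirming $\mu > 0$, so that the ensemble constant $C$ is genuinely positive, and checking the boundary regime $\sigma_k^2 = 0$; the rest is routine convex analysis.
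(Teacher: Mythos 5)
Your proposal is correct and follows essentially the same route as the paper: compute $\mathbb{E}[\mathcal{L}_{\mathcal{S}}(h_\theta)] = \sum_k s_k^2\sigma_k^2$ (you via the trace identity, the paper via the expanded double sum), reduce to the weighted water-filling program, and solve it through the KKT conditions with the reparametrization $C = 1/(\mu\,\delta\,\tau)$. Your remarks on reading Assumption (\ref{app-assum-bwe-3}) jointly with the loss bound and on verifying $\mu > 0$ via saturation of the budget are sound and match what the paper does implicitly.
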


\begin{proof}
Let's introduce the variable $z \sim \mathcal{N}(0, \text{diag}(\sigma^2))$, verifying: $\theta = \theta^* + z$. From Assumption \ref{app-assum-bwe-2}, we derive that:
\begin{equation}
\begin{split}
    \mathcal{L}_{\mathcal{S}}(h_{\theta}) & = \frac{1}{n} || X \theta - y ||_2^2 \\
    & = \frac{1}{n} || X (\theta^* + z) - y ||^2_2 \\
    & = \frac{1}{n} || X \theta^* - y + X z ||^2_2 \\
    & = \frac{1}{n} || X z ||^2_2 \; \; \text{(by definition of $\theta^*$)} \\
    & = \frac{1}{n} \sum_{i=1}^n \sum_{k=1}^p \sum_{j=1}^p X_{ik} X_{ij} z_k z_j \\
\end{split}
\end{equation}

Thus, we have:
\begin{equation}
\begin{split}
    {\mathbb{E}}[\mathcal{L}_{\mathcal{S}}(h_{\theta})] & = \frac{1}{n} \sum_{i=1}^n \sum_{k=1}^p \sum_{j=1}^p X_{ik} X_{ij} {\mathbb{E}}[z_k z_j] \\
    & = \frac{1}{n} \sum_{i=1}^n \sum_{k=1}^p X_{ik}^2  \sigma_k^2 \; \; \text{(by definition of $z$)} \\
    & = \sum_{k=1}^p \left( \frac{1}{n} \sum_{i=1}^n  X_{ik}^2 \right) \sigma_k^2 \\
    & = \sum_{k=1}^p s_k^2 \sigma_k^2
\end{split}
\end{equation}

Combining this results with Assumption \ref{app-assum-bwe-3}, we show that $\sigma^2$ verifies:
\begin{equation}
\label{app-proof-4}
\begin{split}
    \underset{\sigma^2 \in \mathbb{R}_+^p}{\textnormal{maximize}} &  \; \; \; \sum_{k=1}^p \log(\sigma_k^2 + {\theta^*_k}^2) \\
    \textnormal{subject to} & \; \; \;  \sum_{k=1}^p s_k^2 \sigma_k^2 \leq \delta \, \tau
\end{split}
\end{equation}

This expression is a "water-filling" problem (cf \cite{boyd2006convex} Example 5.2) with weighted constraint. The inequality constraint $\sum_{k=1}^p s_k^2 \sigma_k^2 \leq \delta \, \tau$ can be written as an equality constraint, as any increase of $\sigma_k^2$ induces an increase of the objective function.

To solve this problem, we introduce the Lagrange multipliers $\mu \in \mathbb{R}^p_+$ for the constraints $\sigma^2 \geq 0$ and the multiplier $\alpha \in \mathbb{R}$ for the constraint $\sum_{k=1}^p s_k^2 \sigma_k^2 = \delta \, \tau$. By considering the Lagrangian as a function of $\sigma^2$, the KKT conditions are then written:
\begin{gather}
    \frac{-1}{{\theta_k^*}^2 + \sigma_k^2} + \alpha s_k^2  - \mu_k = 0 \; \forall \, k \in [|1, p|] \\
    \sigma_k^2 \geq 0, \; \mu_k \geq 0, \; \mu_k \sigma_k^2 = 0 \; \forall k \in [|1, p|] \; \; \text{and} \; \; \sum_{k=1}^p s_k^2 \sigma_k^2 = \delta \, \tau
\end{gather}

Leading to:
\begin{gather}
    \sigma_k^2 \left( \frac{-1}{{\theta_k^*}^2 + \sigma_k^2} + \alpha s_k^2 \right) = 0 \; \forall \, k \in [|1, p|] \\
    \sigma_k^2 \geq 0, \; \alpha s_k^2 \geq \frac{1}{{\theta_k^*}^2 + \sigma_k^2}, \; \forall k \in [|1, p|] \; \; \text{and} \; \; \sum_{k=1}^p s_k^2 \sigma_k^2 = \delta \, \tau
\end{gather}

Then, for any $k \in [|1, p|]$, we have:
\begin{gather}
    \sigma_k^2 = 0 \; \text{or} \; \sigma^2 = \frac{1}{\alpha s_k^2} - {\theta_k^*}^2
\end{gather}
As $\sigma_k^2 \geq 0$, we deduce that:
\begin{gather}
    \sigma_k^2 = \max \left[\frac{1}{\alpha s_k^2} - {\theta_k^*}^2, 0 \right]
\end{gather}
With $\alpha$ verifying:
\begin{equation}
\begin{split}
     \sum_{k=1}^p s_k^2 \max \left[\frac{1}{\alpha s_k^2} - {\theta_k^*}^2, 0 \right] = \delta \, \tau \\
     \sum_{k=1}^p \max \left[\frac{1}{\alpha \delta \, \tau} - \frac{s_k^2 {\theta_k^*}^2}{\delta \, \tau}, 0 \right] = 1
\end{split}
\end{equation}
By defining $C = \frac{1}{\alpha \delta \, \tau}$, we have:
\begin{equation}
\begin{split}
     \sum_{k=1}^p \max \left[C - \frac{s_k^2 {\theta_k^*}^2}{\delta \, \tau}, 0 \right] = 1
\end{split}
\end{equation}
Which imposes $C>0$.

We conclude then, that there exists $C > 0$ such that, for any $k \in [|1, p|]$:
\begin{gather}
    \sigma_k^2 = \max \left[\frac{C \delta \, \tau}{s_k^2} - {\theta_k^*}^2, 0 \right]
\end{gather}
    
\end{proof}

\section{Proof of Corollary 1}

\begin{corollary}
\label{app-cor}
Let $\mathcal{H}_{\tau}$ be the large weights ensemble defined by Assumptions \ref{app-assum-bwe-1}, \ref{app-assum-bwe-2}, \ref{app-assum-bwe-3} and $x \in \mathbb{R}^p$ an input data, the variance of prediction for $h_{\theta} \in \mathcal{H}_{\tau}$ is:
\begin{equation}
    \underset{\theta \sim \Theta_{\sigma^2}}{\textnormal{Var}}\left[h_{\theta}(x) \right] = \sum_{k=1}^p x_k^2 \max \left[ \frac{C \, \delta \, \tau}{s_k^2} - {\theta_k^*}^2, 0 \right]
\end{equation}
\end{corollary}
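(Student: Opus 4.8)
The plan is to exploit the fact that the prediction $h_\theta(x) = x^T\theta$ is a \emph{linear} function of the random weight vector $\theta$, so its variance follows immediately from the covariance structure of $\theta$ given in Assumption (\ref{app-assum-bwe-1}), and then to substitute the closed-form expression for the componentwise variance $\sigma_k^2$ already established in Theorem \ref{app-thm1}.

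First I would write, for fixed $x \in \mathbb{R}^p$,
\begin{equation}
\underset{\theta \sim \Theta_{\sigma^2}}{\textnormal{Var}}\left[h_{\theta}(x)\right] = \textnormal{Var}\left[x^T \theta\right] = x^T \, \textnormal{Cov}(\theta) \, x.
\end{equation}
Since Assumption (\ref{app-assum-bwe-1}) gives $\textnormal{Cov}(\theta) = \textnormal{diag}(\sigma^2)$, the quadratic form collapses to a weighted sum with no cross terms,
\begin{equation}
x^T \, \textnormal{diag}(\sigma^2) \, x = \sum_{k=1}^p x_k^2 \, \sigma_k^2.
\end{equation}
This is the only genuinely structural step, and it is elementary: the diagonality of the covariance is precisely what removes the off-diagonal contributions $x_k x_j \,\textnormal{Cov}(\theta_k, \theta_j)$ for $k \neq j$.

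It then remains to insert the value of $\sigma_k^2$ obtained from Theorem \ref{app-thm1}, namely $\sigma_k^2 = \max\!\left[\frac{C\,\delta\,\tau}{s_k^2} - {\theta_k^*}^2,\, 0\right]$, which yields the claimed identity directly. I do not expect any real obstacle here: the result is a one-line consequence of Theorem \ref{app-thm1} combined with the linearity of $h_\theta$ and the diagonal covariance assumption. The only point deserving a word of care is that the expression holds for \emph{arbitrary} $x \in \mathbb{R}^p$ (not merely training points), which is automatic since nothing in the computation of $\textnormal{Var}[x^T\theta]$ constrains $x$ to lie in the training domain; this is exactly what makes the corollary meaningful for characterizing out-of-distribution behavior.
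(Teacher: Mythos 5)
Your proposal is correct and follows exactly the paper's own argument: both compute $\textnormal{Var}[x^T\theta]$ using the diagonal covariance from Assumption (\ref{app-assum-bwe-1}) to reduce it to $\sum_{k=1}^p x_k^2 \sigma_k^2$, then substitute the expression for $\sigma_k^2$ from Theorem \ref{app-thm1}. No differences worth noting.
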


\begin{proof}
Let's consider $x \in \mathbb{R}^p$ and $h_{\theta} \in \mathcal{H}_{\tau}$:
\begin{equation}
\begin{split}
     \underset{\theta \sim \Theta_{\sigma^2}}{\textnormal{Var}}\left[h_{\theta}(x) \right] & = \underset{\theta \sim \Theta_{\sigma^2}}{\textnormal{Var}}\left[ \sum_{k=1}^p x_k \theta_k \right] \\
     & = \sum_{k=1}^p x^2_k \sigma_k^2 \; \; \text{(by Assumption \ref{app-assum-bwe-1})}\\
\end{split}
\end{equation}

\end{proof}

\section{Proof of Theorem 2}

\textbf{Notations}:
\begin{itemize}
    \item $X \in \mathbb{R}^{n \times p}$ and $y \in \mathbb{R}^{n}$ are respectively the input and output data which compose the training set $\mathcal{S}$.
    \item $\mathcal{H}_{\tau}$ is an anti-regularized ensemble of fully-connected deep neural networks with $L$ hidden layers and network weights distributed according to $\Theta_{\sigma^2}$ in $\mathbb{R}^d$.
    \item For any $\ell \in [|0, L+1|]$, $p_{\ell}$ is the number of neurons of the $\ell^{th}$ layer.
    \item $g: \mathbb{R} \to \mathbb{R}$ is the activation function applied at each hidden layer.
    \item For every $\ell \in [|0, L|]$, we denote $\phi_{(\ell)}(X) \in \mathbb{R}^{n \times p}$ the representation of the $\ell^{th}$ layer, with $\ell = 0$ the input layer.
    \item For any $j \in [|1, p_{\ell+1}|]$, we denote $\theta_{(\ell, j)} \in \mathbb{R}^p$ the network weights between the layer $\ell$ and the $j^{th}$ component of the layer $\ell + 1$.
    \item For any $\ell \in [|0, L|]$, $\phi_{(\ell)}^*(X) = \mathbb{E}[\phi_{(\ell)}(X)]$ and $s^2_{(\ell)}$ is the diagonal of the matrix $\frac{1}{n} \phi_{(\ell)}^*(X)^T \phi_{(\ell)}^*(X)$.
\end{itemize}

\textbf{Assumptions}:
\begin{equation}
\label{assum-ml-2}
\begin{gathered}
\textnormal{For any random variable $Z$ in $\mathbb{R}$ there exists $\alpha > 0$ such that:} \\ \textnormal{ $\text{Var}[g(Z)] > \alpha \text{Var}[Z]$.}
\end{gathered}
\end{equation}
\begin{equation}
\label{assum-ml-3}
\begin{gathered}
\textnormal{There is $\delta > 0$ such that, for any $\ell \in [|0, L|]$ and any $j \in [|1, p_{\ell+1}|]$,} \\ \textnormal{$\phi_{(\ell)}(X)$ and $\theta_{(\ell, j)}$ verify Assumptions \ref{app-assum-bwe-1}, \ref{app-assum-bwe-2}, \ref{app-assum-bwe-3} with} \\ \textnormal{$\theta \equiv \theta_{(\ell, j)}$, $X \equiv \phi_{(\ell)}^*(X)$, $y \equiv \phi_{(\ell)}^*(X) \theta_{(\ell, j)}^*$ and $\theta^* \equiv \theta_{(\ell, j)}^* \equiv \mathbb{E}[\theta_{(\ell, j)}]$}
\end{gathered}
\end{equation}

\begin{theorem}
\label{app-thm-multi}
There exists a constant $\gamma>0$ such that, for any $h_{\theta} \in \mathcal{H}_{\tau}$ and $x \in \mathbb{R}^p$, the prediction variance verifies:
\begin{equation*}
    \underset{\theta \sim \Theta_{\sigma^2}}{\textnormal{Var}}\left[h_{\theta}(x) \right] \geq \gamma \sum_{\ell = 0}^{L} \langle \phi_{(\ell)}^*(x)^2, \bar{\sigma}_{(\ell)}^2 \rangle
\end{equation*}
With, $ \bar{\sigma}_{(\ell)}^2 = \sum_{j=1}^p \max \left[ \frac{C_{(\ell, j)}}{s^2_{(\ell)}} - {\theta_{(\ell, j)}^*}^2, 0 \right]$; $\, C_{(\ell, j)} \in \mathbb{R}^*_+$
\end{theorem}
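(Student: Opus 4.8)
The plan is to reduce the multi-layer statement to one application of Corollary \ref{app-cor} per layer, and then to glue the per-layer contributions together using the independence of the weight blocks across layers. Write $h_\theta(x) = F(W_0, \dots, W_L)$, where $W_\ell = \{\theta_{(\ell,j)}\}_{j=1}^{p_{\ell+1}}$ collects the weights feeding layer $\ell+1$; under Assumption \ref{assum-ml-3} these blocks are mutually independent and each is centered at $\theta^*_{(\ell,j)}$. Revealing the blocks in the order $W_0, \dots, W_L$ produces the Doob martingale $M_\ell = \mathbb{E}[h_\theta(x) \mid W_0, \dots, W_\ell]$ (with $M_{-1} = \mathbb{E}[h_\theta(x)]$ and $M_L = h_\theta(x)$), whose increments are orthogonal, so that
\[
\textnormal{Var}[h_\theta(x)] = \sum_{\ell=0}^L \mathbb{E}\big[\textnormal{Var}[\, M_\ell \mid W_0, \dots, W_{\ell-1}]\big].
\]
This identity is exactly what produces the sum over $\ell$ in the statement; it remains to bound the $\ell$-th term below by $\gamma \langle \phi_{(\ell)}^*(x)^2, \bar\sigma_{(\ell)}^2\rangle$.

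Next I would fix $\ell$ and condition on the lower blocks $W_0, \dots, W_{\ell-1}$, which freezes the representation $\phi_{(\ell)}(x)$ (it depends only on those blocks) while leaving $\theta_{(\ell,j)}$ and all upper blocks at their marginal laws. Treating the frozen vector $\phi_{(\ell)}(x)$ as the deterministic input and $\theta_{(\ell,j)}$ as the random weights, the computation behind Corollary \ref{app-cor} gives that the pre-activation $a_{(\ell+1,j)}(x) = \phi_{(\ell)}(x)^T \theta_{(\ell,j)}$ has conditional variance $\langle \phi_{(\ell)}(x)^2, \sigma_{(\ell,j)}^2\rangle$, with $\sigma_{(\ell,j)}^2$ the deterministic water-filling vector (\ref{app-thm-wf-eq}) associated with $s^2_{(\ell)}$ through Assumption \ref{assum-ml-3}. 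Taking $\mathbb{E}_{W_{<\ell}}$ and applying Jensen componentwise, $\mathbb{E}[\phi_{(\ell)}(x)_k^2] \geq \phi_{(\ell)}^*(x)_k^2$, lets me pass from the frozen random representation to its mean, and summing over the $p_{\ell+1}$ output neurons (with $\bar\sigma_{(\ell)}^2 = \sum_j \sigma_{(\ell,j)}^2$) yields a total injected pre-activation variance of at least $\langle \phi_{(\ell)}^*(x)^2, \bar\sigma_{(\ell)}^2\rangle$.

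The final step is to propagate this injected variance to the output, i.e.\ to show $\mathbb{E}[\textnormal{Var}[M_\ell \mid W_{<\ell}]] \geq \gamma \langle \phi_{(\ell)}^*(x)^2, \bar\sigma_{(\ell)}^2\rangle$. The increment $M_\ell - M_{\ell-1}$ is the change in the output expectation (averaged over the still-hidden upper blocks $W_{>\ell}$) caused by the fluctuation of $W_\ell$; this fluctuation enters through the layer-$(\ell+1)$ pre-activations computed above. Passing each pre-activation fluctuation through the activation $g$ retains at least a factor $\alpha$ of its variance by Assumption \ref{assum-ml-2}, and each of the at most $L$ hidden layers above $\ell$ likewise retains a uniformly positive fraction of the variance; collecting all these multiplicative floors into a single constant gives the required $\gamma$.

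The main obstacle is precisely this last step: showing rigorously that the variance injected at layer $\ell$ is not annihilated as it travels through the remaining random layers, and that the surviving fraction admits a lower bound uniform in $x$ and in $\ell$. Here the two structural hypotheses do the essential work: the independence of the weight blocks (orthogonality of the martingale increments) guarantees that the per-layer contributions genuinely add rather than interfere, while the uniform variance floor of the activation (Assumption \ref{assum-ml-2}) prevents the nonlinearities from collapsing the signal. The constant $\gamma$ then absorbs the accumulated $\alpha$-factors together with the lower bounds on the downstream linear maps, and substituting back into the variance decomposition of the first step completes the argument.
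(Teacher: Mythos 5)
Your first two steps are sound: the Doob martingale over the weight blocks gives the exact identity $\textnormal{Var}[h_\theta(x)] = \sum_{\ell}\mathbb{E}\left[\textnormal{Var}[M_\ell \mid W_{<\ell}]\right]$, and conditioning on $W_{<\ell}$ does freeze $\phi_{(\ell)}(x)$, so the pre-activations have conditional variance $\langle\phi_{(\ell)}(x)^2,\sigma^2_{(\ell,j)}\rangle$ and Jensen recovers $\phi^*_{(\ell)}(x)^2$. The gap is exactly where you place it, and it is not repairable within your decomposition: the increment $M_\ell - M_{\ell-1}$ sees everything above layer $\ell+1$ only through the \emph{averaged} upper network $\mathbb{E}_{W_{>\ell}}[\,\cdot\,]$, i.e.\ through the mean weights $\theta^*_{(\ell',j)}$ for $\ell' > \ell$. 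The assumptions place no lower bound on these means. For instance, if $\theta^*_{(L,1)}=0$ then $M_{L-1}=\phi_{(L)}(x)^T\theta^*_{(L,1)}\equiv 0$, hence $M_\ell$ is constant for every $\ell<L$, all increments but the last vanish, and the claimed inequality $\mathbb{E}\left[\textnormal{Var}[M_\ell\mid W_{<\ell}]\right]\geq\gamma\langle\phi^*_{(\ell)}(x)^2,\bar\sigma^2_{(\ell)}\rangle$ is false for those $\ell$ even though the corresponding terms on the theorem's right-hand side are not zero. There are no ``lower bounds on the downstream linear maps'' to collect. (A secondary difficulty: $M_\ell$ is not the mean network applied to $g$ of the pre-activations, since $\mathbb{E}_{W_{>\ell}}$ does not commute with the nonlinearities above layer $\ell+1$, so even formulating your propagation constant is delicate.)

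The paper sidesteps this entirely by running a recurrence on the full variances $\textnormal{Var}[\phi_{(\ell+1)}(x)_j]$ rather than on conditional expectations. The key device is the product-variance identity for independent factors, $\textnormal{Var}[XY]=\textnormal{Var}[X]\textnormal{Var}[Y]+\textnormal{Var}[X]\mathbb{E}[Y]^2+\textnormal{Var}[Y]\mathbb{E}[X]^2$: the variance already accumulated in $\phi_{(\ell)}(x)_k$ is passed upward multiplied by the \emph{second moment} $\sigma^2_{(\ell,j,k)}+{\theta^*_{(\ell,j,k)}}^2=\max\left[C_{(\ell,j)}/s^2_{(\ell),k},\,{\theta^*_{(\ell,j,k)}}^2\right]$ of the next weight, which the water-filling solution of Theorem 1 keeps strictly positive even when $\theta^*=0$. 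Unrolling that recurrence from $\textnormal{Var}[\phi_{(0)}(x)_j]=0$ and absorbing the $\alpha$ factors yields $\gamma=\min[1,\tilde\gamma,\tilde\gamma\alpha^{L}]$. If you wish to keep the martingale skeleton you would need the downstream \emph{randomness}, not just the downstream means, to carry the injected variance, and making that precise essentially reproduces the paper's recurrence.
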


\begin{proof}
Let's consider $x \in \mathbb{R}^p$ and $h_{\theta} \in \mathcal{H}_{\tau}$:

Considering Assumption \ref{assum-ml-3}, we apply Theorem \ref{app-thm1} to $\phi_{(\ell)}(x)_j$ for any $\ell \in [|0, L|]$ and any $j \in [|1, p_{\ell+1}|]$, such that:
\begin{equation}
\begin{split}
     \text{Var}[\theta_{(\ell, j)}] = \sigma_{(\ell, j)}^2 = \max \left[ \frac{C_{(\ell, j)}}{s^2_{(\ell)}} - {\theta_{(\ell, j)}^*}^2, 0 \right]
\end{split}
\end{equation}
With $C_{(\ell, j)} \in \mathbb{R}^*_+$. Notice that the $\max$ function is applied element-wise.

For any $\ell \in [|0, L-1|]$ and any $j \in [|1, p_{\ell + 1}|]$, the following inequalities are verified:
\begin{equation}
\label{rec}
\begin{split}
     {\textnormal{Var}}\left[\phi_{(\ell+1)}(x)_j \right] & = {\textnormal{Var}}\left[g\left(\phi_{(\ell)}(x) \theta_{(\ell, j)}\right) \right] \\
     & \geq \alpha {\textnormal{Var}}\left[\phi_{(\ell)}(x) \theta_{(\ell, j)} \right] \quad  \textnormal{(by Assumption \ref{assum-ml-2})} \\
     & \geq \alpha {\textnormal{Var}}\left[ \sum_{k=1}^{p_{\ell}}  \phi_{(\ell)}(x)_k \theta_{(\ell, j, k)} \right] \\
     & \geq \alpha  \sum_{k=1}^{p_{\ell}}  {\textnormal{Var}}\left[\phi_{(\ell)}(x)_k \theta_{(\ell, j, k)} \right] \; \text{($\phi_{(\ell)}(x)_k \theta_{(\ell, k, j)}$ independents)} \\
     & \geq \alpha  \sum_{k=1}^{p_{\ell}}  {\textnormal{Var}}\left[\phi_{(\ell)}(x)_k\right] {\textnormal{Var}}\left[\theta_{(\ell, j, k)} \right]
     + \mathbb{E}\left[\phi_{(\ell)}(x)_k\right]^2 {\textnormal{Var}}\left[\theta_{(\ell, j, k)} \right] \\
     & \quad \quad \quad \;  + \mathbb{E}\left[\theta_{(\ell, j, k)} \right]^2 {\textnormal{Var}}\left[\phi_{(\ell)}(x)_k\right]  \\
     & \geq \alpha \sum_{k=1}^{p_{\ell}} {\textnormal{Var}}\left[\phi_{(\ell)}(x)_k\right] \left( \sigma^2_{(\ell, j, k)} + {\theta_{(\ell, j, k)}^*}^2 \right) + \phi^*_{(\ell)}(x)^2_k \sigma^2_{(\ell, j, k)}  \\
     & \geq \alpha \langle \phi^*_{(\ell)}(x)^2, \sigma_{(\ell, j)}^2  \rangle + \alpha \sum_{k=1}^{p_{\ell}} {\textnormal{Var}}\left[\phi_{(\ell)}(x)_k\right] \left( \sigma^2_{(\ell, j, k)} + {\theta_{(\ell, j, k)}^*}^2 \right) \\
     & \geq \alpha \langle \phi^*_{(\ell)}(x)^2, \sigma_{(\ell, j)}^2  \rangle + \alpha \gamma_{(\ell, j)} \sum_{k=1}^{p_{\ell}} {\textnormal{Var}}\left[\phi_{(\ell)}(x)_k\right]
\end{split}
\end{equation}
With $\gamma_{(\ell, j)} \geq \underset{k \in [|1, {p_{\ell}}|]}{\min} \left( \sigma^2_{(\ell, j, k)} + {\theta_{(\ell, j, k)}^*}^2 \right) = \underset{k \in [|1, {p_{\ell}}|]}{\min} \max \left[ \frac{C_{(\ell, j)}}{s^2_{(\ell)}}, {\theta_{(\ell, j)}^*}^2 \right] > 0$

The fourth inequality comes from the variance decomposition of the product of two independent variables: $X \indep Y \implies \text{Var}[X Y] = \text{Var}[X] \text{Var}[Y] + \text{Var}[X] \mathbb{E}[Y]^2 + \text{Var}[Y] \mathbb{E}[X]^2$.

The last inequality in (\ref{rec}) provides an expression of recurrence for ${\textnormal{Var}}\left[\phi_{(\ell)}(x)_j \right]$. We notice that, ${\textnormal{Var}}\left[\phi_{(0)}(x)_j \right] = {\textnormal{Var}}\left[x_j \right] = 0$ for any $j \in [|1, p_0|]$. For the last layer, we have:
\begin{equation}
{\textnormal{Var}}\left[h_{\theta}(x) \right] = {\textnormal{Var}}\left[\phi_{(L)}(x) \theta_{(L, 1)} \right] \geq \langle \phi^*_{(L)}(x)^2, \sigma_{(L, 1)}^2  \rangle + \gamma_{(L, 1)} \sum_{k=1}^{p_{L}} {\textnormal{Var}}\left[\phi_{(L)}(x)_k\right]
\end{equation}

We pose $\tilde{\gamma} = \underset{\ell \in [|0, L|]}{\min} \underset{j \in [|1, p_{\ell+1}|]}{\min} \gamma_{(\ell, j)}$ and $\gamma = \min[1, \tilde{\gamma}, \tilde{\gamma} \alpha^{L}] > 0$. We then deduce, by considering the recurence expression, that:
\begin{equation}
    {\textnormal{Var}}\left[h_{\theta}(x) \right] \geq \gamma \sum_{\ell = 0}^{L} \langle \phi_{(\ell)}^*(x)^2, \bar{\sigma}_{(\ell)}^2 \rangle
\end{equation}
With, $ \bar{\sigma}_{(\ell)}^2 =  \sum_{j=1}^{p_{\ell + 1}} \sigma^2_{(\ell, j)} = \sum_{j=1}^{p_{\ell + 1}} \max \left[ \frac{C_{(\ell, j)}}{s^2_{(\ell)}} - {\theta_{(\ell, j)}^*}^2, 0 \right]$

\end{proof}

\section{Synthetic Experiments}

\noindent We consider the two synthetic experiments : Two moons classification and 1D regression :

\begin{itemize}
    \item \textbf{Two Moons Classification} : $200$ random data points are generated from the two moons generator\footnote{\url{https://scikit-learn.org/stable/modules/generated/sklearn.datasets.make\_moons.html}} to form the training set. We consider the three uncertainty quantification methods: Deep Ensemble (NLL), Deep Ensemble (MSE) and Deep Anti-Regularized Ensemble (DARE). The first method implements a softmax activation function at the end layer and use the multi-classication NLL as loss function. The two others use a linear activation at the end layer and the mean squared error as loss function. A fully-connected network of three layers with $100$ neurons each and ReLU activations is used as base network. Each ensemble is composed of $20$ networks. The Adam \cite{Kingma2014Adam} optimization is used with learning rate $0.001$, batch size $32$ and $500$ epochs. For DARE, the threshold is set to $\tau = 0.001$. To compute the uncertainty scores, the entropy metric \cite{Lakshminarayanan2017DeepEnsemble} is used for Deep Ensemble (NLL) while Deep Ensemble (MSE) and DARE use the uncertainty metric for classification with MSE loss function defined in Appendix \ref{classif}. We report the uncertainty map produced by each method in Figure \ref{toy_classif}. We also report the results for the OOD detection task in Figure \ref{toy_classif_ood}. To produce this last results, we sample $50$ random points from the two moons generator which acts as validation data. We compute the uncertainty scores on the validation set and use the $95\%$ percentiles as threshold for OOD detection. Any data points below this threshold are considered as in-distribution data (in white) and the others as OOD (in dark blue). 
    \item \textbf{1D Regression} : We reproduce the synthetic univariate Regression experiment from \cite{Jain2016ALDiversity}. We consider the three methods: Deep Ensemble (NLL), Deep Ensemble (MSE) DARE. We use the gaussian NLL loss for regression defined in \cite{Lakshminarayanan2017DeepEnsemble} for Deep Ensemble (NLL) and DARE. The mean squared error is used as loss function for Deep Ensemble (MSE). The threshold for DARE is set to $\tau = 0.1$. For any data $x \in \mathbb{R}$, each method, return two values $\mu_x$ and $\sigma_x$ (cf \cite{Lakshminarayanan2017DeepEnsemble}). We report in Figure \ref{toy_reg} the confidence intervals $[\mu_x - 2 \sigma_x, \mu_x + 2 \sigma_x ]$ (in light blue) for any $x$ in the input range.
\end{itemize}

\begin{figure}[H]
    \centering
    \begin{minipage}{0.33\linewidth}
        \centering
        \includegraphics[width=\linewidth]{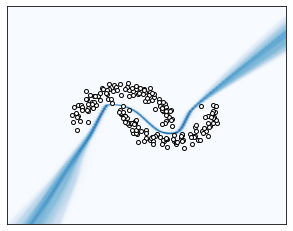} 
        \caption*{(a) Deep Ensemble (NLL)}
    \end{minipage}\hfill
    \begin{minipage}{0.33\linewidth}
        \centering
        \includegraphics[width=\linewidth]{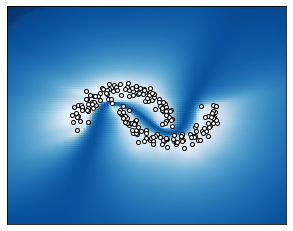} 
        \caption*{(b) Deep Ensemble (MSE)}
    \end{minipage}\hfill
    \begin{minipage}{0.33\linewidth}
        \centering
        \includegraphics[width=\linewidth]{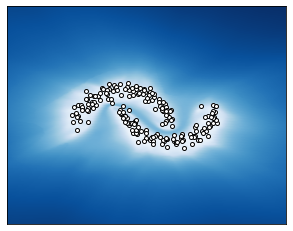} 
        \caption*{(c) DARE}
    \end{minipage}
    \caption{\textbf{Two-moons uncertainty estimation} Darker areas correspond to higher predicted uncertainty.}
    \label{toy_classif}
\end{figure}

\begin{figure}[H]
    \centering
    \begin{minipage}{0.33\linewidth}
        \centering
        \includegraphics[width=\linewidth]{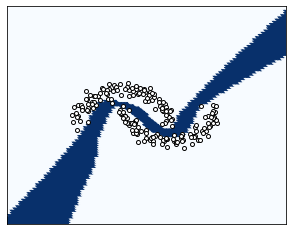} 
        \caption*{(a) Deep Ensemble (NLL)}
    \end{minipage}\hfill
    \begin{minipage}{0.33\linewidth}
        \centering
        \includegraphics[width=\linewidth]{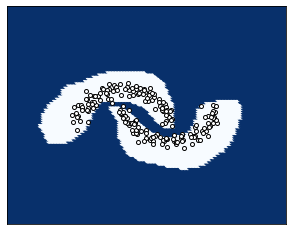} 
        \caption*{(b) Deep Ensemble (MSE)}
    \end{minipage}\hfill
    \begin{minipage}{0.33\linewidth}
        \centering
        \includegraphics[width=\linewidth]{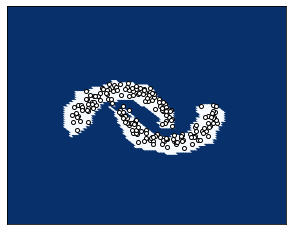} 
        \caption*{(c) DARE}
    \end{minipage}
    
    \caption{\textbf{OOD detection}. Data classified as OOD are in dark blue OOD while the ones classified as in-distribution are in white.}
    \label{toy_classif_ood}
\end{figure}

\begin{figure}[H]
    \centering
    \begin{minipage}{0.33\linewidth}
        \centering
        \includegraphics[width=\linewidth]{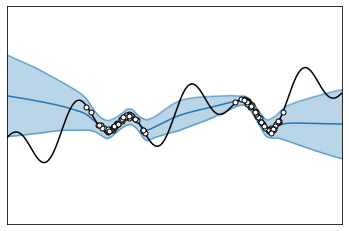} 
        \caption*{(a) Deep Ensemble (NLL)}
    \end{minipage}\hfill
    \begin{minipage}{0.33\linewidth}
        \centering
        \includegraphics[width=\linewidth]{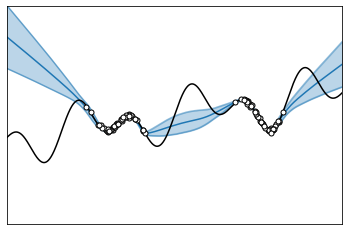} 
        \caption*{(b) Deep Ensemble (MSE)}
    \end{minipage}\hfill
    \begin{minipage}{0.33\linewidth}
        \centering
        \includegraphics[width=\linewidth]{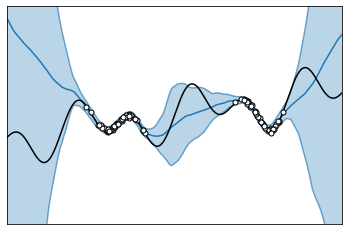} 
        \caption*{(c) DARE}
    \end{minipage}
    \caption{\textbf{1D Regression uncertainty estimation}. the confidence intervals for $\pm 2 \sigma$ are given in light blue.}
    \label{toy_reg}
\end{figure}

\section{CityCam Experiments}

We consider the three following experiments:

\begin{itemize}
    \item \textbf{Weather-shift}: in this experiment, we select the images of the three cameras n°164, 166 and 572 and use the images recorded during February the $23^{\text{th}}$. During this particular day, the weather significantly change between the beginning and the end of the day. Thus, to evaluate the robustness of the method to a change in the weather condition, we split the dataset in two subsets: we consider the images recorded before $2$ pm as in-distribution and the others as out-of-distribution. we observe a shift in the weather condition around $4$ pm which produces a visual shift in the images. After $4$ pm, the rain starts to fall and progressively damages the visual rendering of the images due to the reverberation. This is particularly the case for camera n°164, as the water drops have landed on the camera and blur the images. Each dataset is composed of around $2500$ images.
    \item \textbf{Camera-shift}: this experiment uses the images from 10 cameras in the CityCam dataset. For each round, 5 cameras are randomly selected as in-distribution while the 5 remaining cameras are considered as out-of-distribution. In average, each set is composed of around $20000$ images.
    \item \textbf{Bigbus-shift}: The CityCam dataset contains images marked as "big-bus" referring to the fact that a bus appears and masks an significant part of the image \cite{Zhang2017WebCamT}. We then select the 5 cameras for which some images are marked as "big-bus" and use this marker to split the dataset between in-distribution and out-of-distribution samples. The in-distribution set is composed of around $17000$ images while the other set contains around $1000$ images.
\end{itemize}

\begin{figure}[H]
    \centering
    \begin{minipage}{0.33\linewidth}
        \centering
        \includegraphics[width=\linewidth]{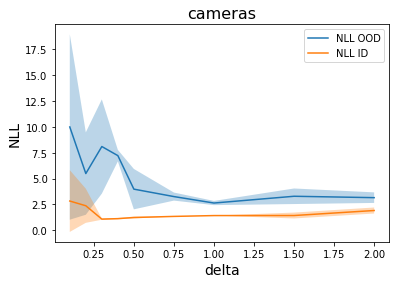} 
        \caption*{(a) Cameras-shift}
    \end{minipage}\hfill
    \begin{minipage}{0.33\linewidth}
        \centering
        \includegraphics[width=\linewidth]{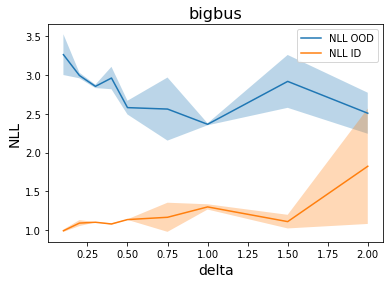} 
        \caption*{(b) Bigbus-shift}
    \end{minipage}\hfill
    \begin{minipage}{0.33\linewidth}
        \centering
        \includegraphics[width=\linewidth]{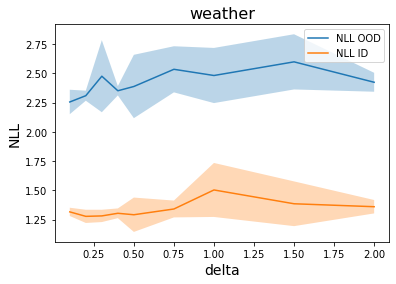} 
        \caption*{(c) Weather-shift}
    \end{minipage}
    \caption{\textbf{Ablation study for the $\delta$ parameter}. The DARE NLL is reported for different values of $\delta$ with $\tau = 1 + \delta$. We globally observe a decrease of the OOD NLL and a sligth increase of the in-distribution NLL when $\delta$ increases.}
    \label{nll-delta}
\end{figure}

\section{OOD Detection Experiments}
\label{classif}

For the OOD detection experiments, we follow the setup of \cite{Angelo2021RepulsiveDeepEnsemble}. For the Fashion-MNIST experiemnt, a three layers fully-connected network is use as base network, with ReLU activations. We use the Adam optimizer \cite{Kingma2014Adam} with learning rate $0.001$ and a batch size of $128$. The maximal number of epochs is set to $50$. For the CIFAR10 experiment, the same base network is used on top of a pretrained ResNet32. We consider the same optimization parameters.

For DARE and Deep Ensemble (MSE), the uncertainty score is computed as follows:

\begin{equation}
    \text{OOD-score}(x) = \frac{1}{M} \sum_{k=1}^M || h_k(x) - \widehat{y}_k ||_2^2 + \frac{1}{M} \sum_{k=1}^M \left|\left| h_k(x) - \frac{1}{M} \sum_{i=1}^M  h_i(x) \right|\right|_2^2 \end{equation}

With $M$ the number of network in the ensemble, $h_k$ the network members and $\widehat{y}_k$ the one-hot-encoded vector of the predicted class multiplied by the number of classes.

}

\end{document}